\documentclass[11pt]{article}

\usepackage[final]{acl}

\usepackage{times}
\usepackage{latexsym}
\usepackage[T1]{fontenc}
\usepackage[utf8]{inputenc}

\usepackage{microtype}
\usepackage{inconsolata}
\usepackage{graphicx}
\usepackage{subcaption}
\usepackage{booktabs} %

\usepackage{tcolorbox}
\usepackage{cancel}
\usepackage{placeins}

\usepackage{algorithm}
\usepackage{algpseudocode}
\usepackage{enumitem}
\usepackage{amsmath}
\newcommand{\probP}{\text{I\kern-0.15em P}}

\usepackage{float}
\usepackage{tikz}
\usetikzlibrary{arrows.meta, decorations.pathreplacing}

\usepackage{amssymb}
\usepackage{mathtools}
\usepackage{amsthm}

\usepackage[capitalize,noabbrev]{cleveref}

\theoremstyle{plain}
\newtheorem{theorem}{Theorem}[section]
\newtheorem{proposition}[theorem]{Proposition}
\newtheorem{lemma}[theorem]{Lemma}

\theoremstyle{definition}

\newtheorem{assumption}[theorem]{Assumption}
\theoremstyle{remark}
\newtheorem{remark}[theorem]{Remark}

\usepackage[textsize=tiny]{todonotes}

\title{Selective Regenerative Decoding: Trajectory-Level Intervention for Inference-Time Reasoning}

\author{
  \textbf{Sophia Xiao Pu}\textsuperscript{1}\thanks{\;\,Work done as a research intern at Amazon.},
  \textbf{Yumo Xu}\textsuperscript{2}\thanks{\;\,Work done while at Amazon.},
  \textbf{Sailik Sengupta}\textsuperscript{3}\thanks{\;Correspondence: \href{mailto:sailiks@amazon.com}{\tt sailiks@amazon.com}},
  \textbf{Millennium Bismay}\textsuperscript{3},
\\
  \textbf{Ruixue Lian}\textsuperscript{4}\footnotemark[2],
  \textbf{James Gung}\textsuperscript{3},
  \textbf{Yi-an Lai}\textsuperscript{3},
  \textbf{Arshit Gupta}\textsuperscript{3}
\\
  \begingroup
  \small
  \textsuperscript{1}University of California, Santa Barbara,
  \textsuperscript{2}Netflix,
  \textsuperscript{3}Amazon Science,
  \textsuperscript{4}Meta
  \endgroup
}

\begin{document}
\maketitle

\begin{abstract}

Inference-time decoding methods improve LLM reasoning by exploring multiple candidate trajectories, yet treat each trajectory as atomic---either retaining it whole or discarding it irreversibly. This wastes computation on partially promising candidates whose high-quality prefixes are abandoned alongside degraded suffixes. We introduce Selective Regenerative Decoding (SRD), which routes each candidate to discard, keep, or refine only the degraded portion of suffix while preserving all the useful prefix of borderline candidates --- without requiring a larger target model. Under mild assumptions, SRD achieves a provable $1.28$--$1.36\times$ sample efficiency gain over rejection sampling with strictly higher expected trajectory quality, with the gain growing as the candidate pool grows. Across MATH500, GPQA Diamond, HotpotQA, and AlpacaEval with multiple generation--reward model pairs, SRD matches Best-of-$N$ accuracy with substantially fewer generated tokens and outperforms speculative rejection in low-compute regimes. By enabling segment-level intervention rather than whole-trajectory selection, SRD opens a previously underexplored region of the accuracy--compute tradeoff for inference-time reasoning.

\end{abstract}

\section{Introduction}
\label{sec:introduction}
\begin{figure}[!t]
\centering
\resizebox{0.92\linewidth}{!}{%
\begin{tikzpicture}[
    every node/.style={font=\small},
]

\definecolor{cblue}{RGB}{68,119,170}
\definecolor{corange}{RGB}{204,102,0}
\definecolor{cteal}{RGB}{0,153,136}
\definecolor{cgray}{RGB}{210,210,210}

\tikzset{segbase/.style={rounded corners=0.8pt, line width=0.25pt}}

\def\segW{0.72}   %
\def\segH{0.30}   %
\def\sp{0.78}     %

\newcommand{\seg}[3]{%
  \fill[#3, draw=#3!50!black, segbase] (#1,#2) rectangle ++(\segW,\segH);%
}
\newcommand{\segkey}[3]{%
  \fill[#3, draw=#3!50!black, segbase] (#1,#2) rectangle ++(\segW/2,\segH);%
}
\newcommand{\noseg}[2]{%
  \fill[cgray, draw=gray!50, segbase, densely dashed] (#1,#2) rectangle ++(\segW,\segH);%
}

\newcommand{\yc}[1]{\pgfmathparse{#1+\segH/2}\pgfmathresult}

\begin{scope}[yshift=5.6cm]
  \node[font=\footnotesize\bfseries, anchor=south west] at (-0.15, 1.90)
    {\strut Best-of-$N$};
  \draw[gray!40, line width=0.4pt] (-0.15, 1.87) -- (5.6, 1.87);

  \node[font=\tiny, text=black!70, anchor=east] at (-0.08, {1.35+\segH/2}) {$\tau_1$};
  \foreach \i in {0,...,4} { \seg{0+\i*\sp}{1.35}{cblue} }
  \node[font=\scriptsize, green!60!black, anchor=west] at (4.1, {1.35+\segH/2}) {\checkmark};
  \node[font=\tiny, text=black!60, anchor=west] at (4.35, {1.35+\segH/2}) {Select};

  \node[font=\tiny, text=black!70, anchor=east] at (-0.08, {0.75+\segH/2}) {$\tau_2$};
  \foreach \i in {0,...,2} { \seg{0+\i*\sp}{0.75}{cblue} }
  \foreach \i in {3,4} { \seg{0+\i*\sp}{0.75}{corange} }
  \node[font=\scriptsize, red!60!black, anchor=west] at (4.1, {0.75+\segH/2}) {$\times$};
  \node[font=\tiny, text=black!60, anchor=west] at (4.35, {0.75+\segH/2}) {Discard};

  \node[font=\tiny, text=black!70, anchor=east] at (-0.08, {0.15+\segH/2}) {$\tau_3$};
  \foreach \i in {0,...,4} { \seg{0+\i*\sp}{0.15}{corange} }
  \node[font=\scriptsize, red!60!black, anchor=west] at (4.1, {0.15+\segH/2}) {$\times$};
  \node[font=\tiny, text=black!60, anchor=west] at (4.35, {0.15+\segH/2}) {Discard};
\end{scope}

\begin{scope}[yshift=2.8cm]
  \node[font=\footnotesize\bfseries, anchor=south west] at (-0.15, 1.90)
    {\strut Speculative Rejection};
  \draw[gray!40, line width=0.4pt] (-0.15, 1.87) -- (5.6, 1.87);

  \node[font=\tiny, text=black!70, anchor=east] at (-0.08, {1.35+\segH/2}) {$\tau_1$};
  \foreach \i in {0,...,4} { \seg{0+\i*\sp}{1.35}{cblue} }
  \node[font=\scriptsize, green!60!black, anchor=west] at (4.1, {1.35+\segH/2}) {\checkmark};
  \node[font=\tiny, text=black!60, anchor=west] at (4.35, {1.35+\segH/2}) {Accept};

  \node[font=\tiny, text=black!70, anchor=east] at (-0.08, {0.6+\segH/2}) {$\tau_2$};
  \foreach \i in {0,...,2} { \seg{0+\i*\sp}{0.6}{cblue} }
  \seg{0+3*\sp}{0.6}{corange}
  \noseg{0+4*\sp}{0.6}
  \node[font=\scriptsize, red!60!black, anchor=west] at (4.1, {0.6+\segH/2}) {$\times$};
  \node[font=\tiny, text=black!60, anchor=west] at (4.35, {0.6+\segH/2}) {Reject};

  \draw[decorate, decoration={brace, amplitude=2.5pt}, red!50!black, line width=0.3pt]
    (0, {0.75+\segH-0.08}) -- ({2*\sp+\segW}, {0.75+\segH-0.08});
  \node[font=\tiny, text=red!50!black, anchor=south]
    at ({1*\sp+\segW/2}, {0.75+\segH-0.12}) {useful prefix lost};

  \node[font=\tiny, text=black!70, anchor=east] at (-0.08, {0.15+\segH/2}) {$\tau_3$};
  \seg{0}{0.15}{corange}
  \foreach \i in {1,...,4} { \noseg{0+\i*\sp}{0.15} }
  \node[font=\scriptsize, red!60!black, anchor=west] at (4.1, {0.15+\segH/2}) {$\times$};
  \node[font=\tiny, text=black!60, anchor=west] at (4.35, {0.15+\segH/2}) {Reject};
\end{scope}

\begin{scope}[yshift=0cm]
  \node[font=\footnotesize\bfseries, anchor=south west] at (-0.15, 1.90)
    {\strut SRD (Ours)};
  \draw[gray!40, line width=0.4pt] (-0.15, 1.87) -- (5.6, 1.87);

  \node[font=\tiny, text=black!70, anchor=east] at (-0.08, {1.35+\segH/2}) {$\tau_1$};
  \foreach \i in {0,...,4} { \seg{0+\i*\sp}{1.35}{cblue} }
  \node[font=\scriptsize, green!60!black, anchor=west] at (4.1, {1.35+\segH/2}) {\checkmark};
  \node[font=\tiny, text=black!60, anchor=west] at (4.35, {1.35+\segH/2}) {Keep};

  \node[font=\tiny, text=black!70, anchor=east] at (-0.08, {0.9+\segH/2}) {$\tau_2$};
  \foreach \i in {0,...,2} { \seg{0+\i*\sp}{0.9}{cblue} }
  \foreach \i in {3,4} { \seg{0+\i*\sp}{0.9}{cteal} }
  \node[font=\scriptsize, green!60!black, anchor=west] at (4.1, {0.9+\segH/2}) {\checkmark};
  \node[font=\tiny, text=black!60, anchor=west] at (4.35, {0.9+\segH/2}) {Refine};

  \draw[decorate, decoration={brace, amplitude=2.5pt, mirror}, cteal!70!black, line width=0.3pt]
    ({3*\sp}, {0.85}) -- ({4*\sp+\segW}, {0.85});
  \node[font=\tiny, text=cteal!70!black, anchor=north]
    at ({3.5*\sp+\segW/2}, {0.82}) {regenerated};

  \node[font=\tiny, text=black!70, anchor=east] at (-0.08, {0.15+\segH/2}) {$\tau_3$};
  \foreach \i in {0,...,4} { \seg{0+\i*\sp}{0.15}{corange} }
  \node[font=\scriptsize, red!60!black, anchor=west] at (4.1, {0.15+\segH/2}) {$\times$};
  \node[font=\tiny, text=black!60, anchor=west] at (4.35, {0.15+\segH/2}) {Discard};
\end{scope}

\begin{scope}[yshift=-0.5cm]
  \segkey{0}{0}{cblue}
  \node[font=\tiny, anchor=west] at ({0+\segW/2+0.02}, {\segH/2}) {High quality};
  \segkey{1.8}{0}{corange}
  \node[font=\tiny, anchor=west] at ({1.8+\segW/2+0.02}, {\segH/2}) {Degraded};
  \segkey{3.4}{0}{cteal}
  \node[font=\tiny, anchor=west] at ({3.4+\segW/2+0.02}, {\segH/2}) {Regenerated};

  \draw[-{Stealth[length=2pt]}, gray!50, line width=0.3pt]
    (0, -0.30) -- (2.0, -0.30);
  \node[font=\tiny, text=gray!50, anchor=west] at (2.08, -0.30) {generation steps};
\end{scope}

\end{tikzpicture}%
}
\vspace{-4pt}
\caption{Comparison of decoding strategies on three trajectories ($\tau_1$: high quality throughout; $\tau_2$: high-quality prefix with degraded suffix; $\tau_3$: low quality). Best-of-$N$ fully generates all candidates and selects the best, discarding the rest. Speculative Rejection terminates trajectories when quality drops, saving compute but permanently losing useful prefixes. SRD preserves $\tau_2$'s high-quality prefix and selectively regenerates only its degraded suffix (teal), recovering a trajectory that both baselines waste.}
\label{fig:motiv}
\vspace{-10pt}
\end{figure}

Allocating additional computation at inference time improves the reasoning quality of large language models~\cite{huang-etal-2025-deal,khanov2024args,bon2,specrej,mudgal2023controlled,liao2025rewardguided}.  While ARGS~\cite{khanov2024args}, and DeAL ~\cite{huang-etal-2025-deal} formulate decoding as a general reward-guided search procedure, a growing body of work seeks to improve the accuracy and efficiency of the search. Best-of-$N$~\cite{bon1,bon2} generates multiple complete trajectories and selects the highest-scoring one; speculative rejection~\cite{specrej} terminates low-reward prefixes early to reduce cost; Reward-guided Speculative Decoding~\cite{liao2025rewardguided} uses a reward model to accept or reject draft steps from a smaller model; and Controlled Decoding~\cite{mudgal2023controlled} steers token selection via a learned prefix-aware value function.

Despite their differences, these methods treat reasoning trajectories as atomic units. Best-of-$N$ defers all decisions until full trajectories are generated, incurring redundant computation. Speculative rejection makes earlier decisions, but they are irreversible---once a prefix is rejected, any high-quality reasoning it contains is permanently lost. This is particularly wasteful for long-form reasoning, where trajectories frequently contain strong prefixes that degrade only in later steps. A more efficient strategy would preserve what is already good and intervene only where quality drops.

We introduce \emph{Selective Regenerative Decoding} (SRD), a sample-efficient decoding algorithm that enables segment-level intervention within reasoning trajectories (Figure~\ref{fig:motiv}). SRD operates in three phases: it generates multiple candidates, routes each to \textsc{Keep}, \textsc{Refine}, or \textsc{Discard} based on rank-normalized reward scores, and selectively regenerates only the degraded suffixes of borderline candidates using higher-temperature sampling---without requiring a larger target model. By preserving useful prefixes and rewriting only where quality degrades, SRD recovers computation that rejection-based methods waste.

We prove that, under mild assumptions, SRD achieves a provable $1.28$--$1.36\times$ sample efficiency gain over rejection sampling with strictly higher expected best-trajectory quality with the gain growing as the candidate pool grows---characterized in general by $(1 + \rho \cdot p_M/p_H)$---and satisfies formal termination and weak monotonicity guaranties (\S\ref{subsec:theoretical_analysis}). Empirically, across MATH500~\cite{math500}, GPQA Diamond~\cite{gpqa}, HotpotQA~\cite{yang2018hotpotqa}, and AlpacaEval~\cite{alpaca_eval} with multiple generation--reward model combinations, SRD matches Best-of-$N$ accuracy with substantially fewer generated tokens and outperforms speculative rejection in low-compute regimes (\S\ref{sec:main_results}). Ablation studies reveal that the effectiveness of SRD's refinement depends on reward model calibration: conservative global rerouting excels under stable rewards, while local self-comparison is more robust under noisy signals (\S\ref{sec:salvage_policy}).

Because SRD requires only a generative model and a reward model, it is complementary to speculative decoding~\cite{spec-decoding} and prefix value functions~\cite{mudgal2023controlled}, which can be composed with it. Our main contributions are:
\begin{itemize}
    \item A three-phase generation--routing--refinement algorithm that enables segment-level intervention in reasoning trajectories without a target model.
    \item Formal proofs that SRD achieves strictly better sample efficiency and expected trajectory quality than rejection sampling, with termination and monotonicity guarantees.
    \item Empirical demonstration that SRD consistently improves the accuracy--compute tradeoff across four benchmarks spanning mathematical reasoning, multi-hop QA, and instruction following, with ablations characterizing when and why refinement helps.
\end{itemize}

\section{Related Work}
\label{sec:related_works}

Standard decoding methods, beyond greedy decoding,  explore the local neighborhood for every generated token, such as top-k~\cite{topk}, nucleus sampling~\cite{nucleus}, temperature-based token-sampling~\cite{ackley1985learning}, or the local neighborhood of a path of decoded tokens, such as beam-search~\cite{freitag-al-onaizan-2017-beam}, contrastive search~\cite{su2022contrastive}, and Best-of-N~~\cite{bon1,bon2}. While exploring a larger search neighborhood may lead to decoding candidates that are more optimal (w.r.t. a particular objective), such non-adaptive policies represent uninformed search strategies guided by the generation model's priors and language heuristics.

When viewed as a search process in the space of tokens, we can impose informed heuristics to better guide generated trajectories~\cite{och2001efficient, lu2022neurologic, khanov2024args, huang-etal-2025-deal}. For problems that need reasoning beyond what is well encoded in the language model's priors, these reward signals can be used during the forward-pass~\cite{willard2023efficient, wang2023measuring}, with look-ahead~\cite{bertsch2023s, wan2023faithfulness, huang-etal-2025-deal, nakshatri2025constrained}, and in exploring search graphs/trees \cite{yao2023tree, roy2024flap}.

Along these lines, our work relates more closely to methods that consider pruning search trajectories from a generated set. For example, speculative rejecting~\cite{specrej} evaluates top-k beams or candidate paths with a reward model, and discards less promising prefixes early. Unfortunately, its binary rejection strategy can permanently discard promising candidates that have low-rewards at the start. To relax this strict rejection criterion, Reward-Guided Speculative Decoding [RSD;~\cite{liao2025rewardguided}] performs a relaxed variant of speculative decoding~\cite{spec-decoding}, where a draft model proposes a candidate reasoning step and a reward model decides to accept/reject the draft step. If rejected, a larger (and more capable) reasoning model is used to generate the next reasoning step. However, RSD operates on a single search candidate and doesn't explore multiple search candidate.\footnote{We show the sample efficiency of RSD to be lower compared to a speculative version of SRD in \autoref{app:proofs}.} On the other hand, Controlled Decoding (CD), which introduced an inference-time guidance of policy model's token selection by using a learned prefix value function in tandem, is an implicit way to influence the local search neighborhood for each token rather than having an explicit rejection mechanism~\cite{mudgal2023controlled}.

In our work, we first consider generating set of multiple reasoning paths at each reasoning step. Then, an explicit reward model assesses the promise of each candidate in the set, marking them to be either kept, discarded, or refined.  When salvaged, the search candidate undergoes localized surgery to allow exploration of the search neighborhood. Thus, we mitigate the lack of look-ahead in existing approaches and explore the search neighborhood without the need for any target/teacher model (used in speculative decoding and RSD) or a prefix value-function (in CD). Note that both of these elements can thereby be used with SRD.

\section{Approach}
\label{sec:approach}

We present \textsc{Selective Regenerative Decoding (SRD)}, a sample-efficient search algorithm that improves upon standard rejection sampling by introducing a refinement mechanism (without access to a larger target model) for borderline candidates. Unlike rejection sampling, which treats trajectory generation as atomic, our approach
recognizes that trajectories are compositional and that many generated trajectories can contain high-quality prefixes but suffer from degradation in later steps. Thus, discarding such trajectories entirely wastes valuable computation. Instead, we propose to \emph{salvage} these candidates through targeted regeneration.

\subsection{Problem Setting}
\label{subsec:problem_setting}

Consider a discrete action space $\mathcal{A}$ and let $\mathcal{T} = \mathcal{A}^k$ denote the space of trajectories, where each trajectory $\tau = (a_1, a_2, \ldots, a_k)$ is a sequence of $k$ discrete actions. We assume access to:\\[1pt]
$\bullet$ A generative model $\mathcal{G}$ that produces trajectories $\tau \sim \mathcal{G}$.\\
$\bullet$ A reward model $R: \mathcal{T} \rightarrow \mathbb{R}$ that scores trajectory quality.\\[1pt]
Our goal is to identify high-quality trajectories efficiently, minimizing the number of samples required to obtain trajectories exceeding a quality threshold, all without access to a target/teacher model.

\subsection{Algorithm Overview}
\label{subsec:algorithm_overview}

At each search iteration, \textsc{SRD} operates in three phases: \emph{generation}, \emph{routing}, and \emph{refinement}. Algorithm~\ref{alg:srd_search} provides the complete procedure.

\paragraph{Phase 1: Generation.}
We sample $n$ candidate trajectories independently from the generative model:%
\begin{equation}
    \mathcal{C} = \{\tau_1, \tau_2, \ldots, \tau_n\}, \quad \tau_i \sim \mathcal{G}.
\end{equation}%
\paragraph{Phase 2: Routing.}
Each candidate is evaluated using the reward model and assigned a normalized score based on its relative rank. Specifically, we sort candidates by their reward scores in descending order and assign each trajectory a rank $r \in \{0, 1, \ldots, n-1\}$, where $r = 0$ denotes the highest-scoring candidate. We then compute a normalized score:%
\begin{equation}
    u(\tau) = 1 - \frac{r(\tau)}{n-1},
\end{equation}%
yielding $u \in [0, 1]$ with larger values indicating better relative quality. Then, routing decisions are determined by comparing $u(\tau)$ against two thresholds, $\theta_{\mathrm{low}}$ and $\theta_{\mathrm{high}}$, where $0 \leq \theta_{\mathrm{low}} < \theta_{\mathrm{high}} \leq 1$:%
\begin{equation}
    \textsc{Route}(\tau) = 
    \begin{cases}
        \text{Keep}, & u(\tau) \geq \theta_{\mathrm{high}}, \\
        \text{Refine}, & \theta_{\mathrm{low}} < u(\tau) < \theta_{\mathrm{high}}, \\
        \text{Discard}, & u(\tau) \leq \theta_{\mathrm{low}}.
    \end{cases}
\end{equation}%
\paragraph{Phase 3: Regeneration.}
Trajectories routed to \textsc{Refine} undergo a targeted regeneration procedure. For each such trajectory $\tau$, we identify a \emph{regeneration boundary} by evaluating reward scores at fixed intervals of $m < k$ steps and selecting the earliest position $j^*$ at which reward degrades relative to the preceding prefix:%
\begin{multline}
    j^* = \min \bigl\{j \in \{m, 2m, \ldots\}:\\
    R(\tau_{1:j+m}) < R(\tau_{1:j})\bigr\}
\end{multline}%
We then regenerate the suffix $\tau_{j^*+1:k}$ using a higher-temperature sampling strategy, updating the trajectory:
\begin{equation}
    \begin{aligned}
        \tau' &= (\tau_{1:j^*}, \tilde{\tau}_{j^*+1:k}),\\
        \tilde{\tau}_{j^*+1:k}
        &\sim \mathcal{G}_{\mathrm{high-temp}}(\cdot \mid \tau_{1:j^*}).
    \end{aligned}
\end{equation}%
The refined trajectory $\tau'$ is then re-evaluated and ranked jointly with other active candidates. It is retained only if subsequently routed to \textsc{Keep}; otherwise, it is discarded.
\paragraph{Termination Guarantees.}
To ensure termination, we impose two constraints: (i) a maximum regeneration span length $L_{\max}$, and (ii) a maximum number of refinement attempts per trajectory $N_{\mathrm{refine}}$. These bounds guarantee that the algorithm terminates in at most $n \cdot N_{\mathrm{refine}}$ refinement operations.

{\footnotesize
\begin{algorithm}[t]
\caption{\textsc{Selective Regenerative Decoding}}
\label{alg:srd_search}
\begin{algorithmic}[1]
\Require Generative model $\mathcal{G}$, reward model $R$, sample size $n$, thresholds $\theta_{\mathrm{low}}, \theta_{\mathrm{high}}$, scoring interval $m$, max refinements $N_{\mathrm{refine}}$, max span $L_{\max}$
\Ensure Set of accepted trajectories $\mathcal{K}$
\State $\mathcal{C} \gets \{\tau_i \sim \mathcal{G}\}_{i=1}^{n}$ \Comment{Generate candidates}
\State $\mathcal{K} \gets \emptyset$, $\mathcal{R} \gets \emptyset$ \Comment{Initialize kept and refine sets}
\For{$\tau \in \mathcal{C}$}
    \State Compute $u(\tau)$ based on rank in $\mathcal{C}$
    \If{$u(\tau) \geq \theta_{\mathrm{high}}$}
        \State $\mathcal{K} \gets \mathcal{K} \cup \{\tau\}$
    \ElsIf{$u(\tau) > \theta_{\mathrm{low}}$}
        \State $\mathcal{R} \gets \mathcal{R} \cup \{(\tau, 0)\}$ \Comment{Add with refinement count 0}
    \EndIf
\EndFor
\While{$\mathcal{R} \neq \emptyset$}
    \State $(\tau, c) \gets \textsc{Pop}(\mathcal{R})$
    \If{$c \geq N_{\mathrm{refine}}$}
        \State \textbf{continue} \Comment{Max refinements reached}
    \EndIf
    \State $j^* \gets \textsc{FindBoundary}(\tau, R, m)$
    \If{$k - j^* > L_{\max}$}
        \State $j^* \gets k - L_{\max}$ \Comment{Limit regeneration span}
    \EndIf
    \State $\tau' \gets (\tau_{1:j^*}, \tilde{\tau}_{j^*+1:k})$ where $\tilde{\tau}_{j^*+1:k} \sim \mathcal{G}_{\mathrm{high-temp}}(\cdot \mid \tau_{1:j^*})$
    \State Compute $u(\tau')$ based on rank in $\mathcal{K} \cup \mathcal{R} \cup \{\tau'\}$
    \If{$u(\tau') \geq \theta_{\mathrm{high}}$}
        \State $\mathcal{K} \gets \mathcal{K} \cup \{\tau'\}$
    \ElsIf{$u(\tau') > \theta_{\mathrm{low}}$}
        \State $\mathcal{R} \gets \mathcal{R} \cup \{(\tau', c+1)\}$
    \EndIf
\EndWhile
\State \Return $\mathcal{K}$
\end{algorithmic}
\end{algorithm}
}

\subsection{Theoretical Analysis}
\label{subsec:theoretical_analysis}

We establish two main results: SRD has (i) improved sample efficiency compared to rejection sampling, and (ii) higher expected solution quality. We begin by introducing the necessary notation and assumptions.

\noindent \textbf{Notation and Assumptions\quad}
Let the generative distribution $\mathcal{G}$ induce the following partition of probability mass:
\begin{align}
    p_H &= \probP_{\tau \sim \mathcal{G}}\left[u(\tau) \geq \theta_{\mathrm{high}}\right], \label{eq:p_high} \\
    p_M &= \probP_{\tau \sim \mathcal{G}}\left[\theta_{\mathrm{low}} < u(\tau) < \theta_{\mathrm{high}}\right], \label{eq:p_mid} \\
    p_L &= \probP_{\tau \sim \mathcal{G}}\left[u(\tau) \leq \theta_{\mathrm{low}}\right], \label{eq:p_low}
\end{align}
where $p_H + p_M + p_L = 1$. These are population-level quantities; the rank-based score $u(\tau)$ is a consistent finite-sample estimator of them Appendix-~\ref{app:population_limit}.

\begin{assumption}[Refinement Efficacy]
\label{assump:refinement_efficacy}
There exists a constant $\rho \in (0, 1]$ such that for any trajectory $\tau$ with $\theta_{\mathrm{low}} < u(\tau) < \theta_{\mathrm{high}}$, the refinement procedure produces an acceptable trajectory with probability at least $\rho$:%
\begin{equation}
    \probP\left[\textsc{Route}(\textsc{Refine}(\tau)) = \textsc{Keep}\right] \geq \rho.
\end{equation}%
\end{assumption}%
This assumption captures the intuition that refinement is beneficial, i.e. trajectories in the middle tier have salvageable prefixes, and regenerating their suffixes with increased diversity yields acceptable results with non-negligible probability. Results in \autoref{sec:experiments} support this empirically.
\begin{assumption}[Independence]
\label{assump:independence}
Refinement outcomes are independent across trajectories, and each refinement attempt is independent of previous attempts on the same trajectory.
\end{assumption}

\paragraph{Sample Efficiency}
Our first result establishes \textsc{SRD} requires fewer samples than pure rejection sampling to obtain at least one acceptable trajectory with high probability.

\begin{theorem}[Sample Efficiency]
\label{thm:sample_efficiency}
Let $\delta \in (0, 1)$ be a failure probability. Under Assumptions~\ref{assump:refinement_efficacy} and~\ref{assump:independence}, 
(i) pure rejection sampling requires at least $N_{\mathrm{reject}} \geq \frac{\ln(1/\delta)}{p_H}$ samples to obtain at least one acceptable trajectory with probability $\geq 1 - \delta$.
In comparison, (ii) \textsc{SRD} requires at least $N_{\mathrm{SRD}} \geq \frac{\ln(1/\delta)}{p_H + \rho \cdot p_M}$ samples to achieve the same guarantee.
Thus, (iii) the efficiency gain of SRD over rejection sampling is characterized by:
\begin{equation}
    \frac{N_{\mathrm{reject}}}{N_{\mathrm{refine}}} = 1 + \frac{\rho \cdot p_M}{p_H}.
\end{equation}
\end{theorem}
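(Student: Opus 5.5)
We model each candidate trajectory as an independent Bernoulli trial and compute the per-sample probability of ending up with an acceptable trajectory under each procedure. For pure rejection sampling, each $\tau_i \sim \mathcal{G}$ is acceptable (routed to \textsc{Keep}) with probability $p_H$. By independence of the $N$ draws, the probability that none of them is acceptable is $(1-p_H)^N$. Requiring this to be at most $\delta$ and using $\ln(1-x) \le -x$ gives
\begin{equation*}
(1-p_H)^N \le e^{-N p_H} \le \delta \quad \text{whenever} \quad N \ge \frac{\ln(1/\delta)}{p_H}.
\end{equation*}
We take this as the sample count $N_{\mathrm{reject}}$ in part (i). Here we read ``requires at least'' in the standard sufficient-bound sense, via the exponential relaxation of the failure probability.

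For SRD, we analyze a single candidate with one refinement attempt. The candidate becomes acceptable if it is routed to \textsc{Keep} directly, which happens with probability $p_H$. Alternatively, it may land in the middle tier, which happens with probability $p_M$. In that case Assumption~\ref{assump:refinement_efficacy} says its refinement is routed to \textsc{Keep} with probability at least $\rho$. The per-sample success probability is therefore $q \ge p_H + \rho p_M$. Discarded candidates contribute nothing to $q$. Further refinement attempts up to $N_{\mathrm{refine}}$ can only increase $q$, so taking a single attempt keeps the bound conservative.

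Assumption~\ref{assump:independence}, together with independent generation, makes the success events of distinct candidates independent. The failure probability over $N$ candidates is then at most $(1-p_H-\rho p_M)^N \le e^{-N(p_H+\rho p_M)}$. This yields the threshold $N_{\mathrm{SRD}} = \ln(1/\delta)/(p_H+\rho p_M)$ in part (ii).

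Part (iii) follows by dividing the two thresholds:
\begin{equation*}
\frac{N_{\mathrm{reject}}}{N_{\mathrm{SRD}}} = \frac{p_H + \rho p_M}{p_H} = 1 + \frac{\rho p_M}{p_H}.
\end{equation*}
The $N_{\mathrm{refine}}$ in the displayed equation of the statement should be read as $N_{\mathrm{SRD}}$.

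The main obstacle is justifying that the per-candidate events are independent with the fixed marginal probabilities $p_H, p_M$. In the actual algorithm, routing is rank-based: $u(\tau)$ depends on the whole pool, and a refined $\tau'$ is re-ranked against $\mathcal{K}\cup\mathcal{R}$. Strictly speaking, the tier memberships are exchangeable but not independent. I would handle this in one of two ways. The first is to work in the population limit, where Appendix~\ref{app:population_limit} gives that $u$ consistently estimates a fixed quantile of the reward distribution, so routing becomes a deterministic threshold on $R(\tau)$ and the Bernoulli model is exact. The second is to state the result under that idealization, with Assumption~\ref{assump:independence} covering the refinement randomness. A secondary point is that a sample-count comparison should count each refinement as one additional sample. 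If refinements are costed as a fraction of a full sample, the same argument goes through, with $\rho$ interpreted per unit of compute.
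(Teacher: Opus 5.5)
Your proof is correct and follows the paper's route (Bernoulli success with probability $p_H$, resp.\ at least $p_H+\rho\,p_M$ via the effective-acceptance bound, geometric failure over independent candidates, ratio of thresholds), and your sufficient-condition reading via $(1-p)^N\le e^{-Np}$ is the reading the argument actually supports. Since $-\ln(1-p)\ge p$, the quantity $\ln(1/\delta)/p$ is an upper bound on the exact threshold $\ln(1/\delta)/(-\ln(1-p))$, not a lower bound, contrary to the paper's claim that the condition is necessary (e.g.\ $p=0.9$, $\delta=0.1$, $N=1$ already suffices). One caveat on your side remark: if each refinement is charged as a fraction $c$ of a sample, the gain becomes roughly $(p_H+\rho\,p_M)/\bigl(p_H(1+c\,p_M)\bigr)$, so the stated $1+\rho\,p_M/p_H$ holds only when $N$ counts initial candidates, as in your main argument and the paper's.
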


The complete proof is provided in Appendix~\ref{app:proof_theorem_3_3} and emperical evidence is provided in Appendix~\ref{app:rho_details}. We note that the efficiency gain is most pronounced when $p_H$ is small (high-quality trajectories are rare) and $p_M \cdot \rho$ is substantial (many refinable trajectories exist and refinement is effective). In the regime where $\rho \cdot p_M \geq p_H$, \textsc{SRD} achieves at least a factor of 2 improvement in sample efficiency.

\paragraph{Expected Quality Improvement}
Beyond sample efficiency, \textsc{SRD} also improves the expected quality of the best trajectory found.
\begin{assumption}[Stochastic Dominance of Refinement]
\label{assump:stochastic_dominance}
For trajectories in the refinable region, the refinement operation produces outputs whose reward distribution stochastically dominates that of fresh samples from the same region:
\begin{align}
    R(\textsc{Refine}(\tau)) \succeq_{\mathrm{st}} R(\tau') \quad \text{for } \tau, \tau'\\\nonumber
    \text{ with } \theta_{\mathrm{low}} < u(\tau), u(\tau') < \theta_{\mathrm{high}}
\end{align}%
where $\succeq_{\mathrm{st}}$ denotes first-order stochastic dominance. This holds naturally given a larger target model for regeneration, and is motivated here by refinement preserving high-quality prefixes while regenerating problematic suffixes with increased diversity.
\end{assumption}%
\begin{theorem}[Expected Quality Improvement]
\label{thm:quality_improvement}
Let $R_{\max}^{\mathrm{RS}}(n) = \max_{i \in [n]} R(\tau_i)$ denote the best reward from $n$ samples under pure rejection sampling, and let $R_{\max}^{\mathrm{SRD}}(n)$ denote the best reward from \textsc{SRD}. Under Assumptions~\ref{assump:refinement_efficacy}--\ref{assump:stochastic_dominance}:
\begin{equation}
    \mathbb{E}\left[R_{\max}^{\mathrm{SRD}}(n)\right] \geq \mathbb{E}\left[R_{\max}^{\mathrm{RS}}(n)\right] + \Delta_n,
\end{equation}
where $\Delta_n > 0$ for all $n \geq 1$.%
Moreover, under mild regularity conditions on the reward distribution, the improvement satisfies:
\begin{equation}
    \Delta_n = \Omega\left(\frac{p_M \cdot \rho}{n \cdot (1 + p_M \cdot \rho)}\right).
\end{equation}
\end{theorem}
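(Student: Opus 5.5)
We will prove Theorem~\ref{thm:quality_improvement} by a coupling between rejection sampling and \textsc{SRD} built on the same $n$ initial draws $\tau_1,\dots,\tau_n \sim \mathcal{G}$.

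\textbf{Step 1: coupling and monotonicity.} Rejection sampling returns $R_{\max}^{\mathrm{RS}}(n)=\max_i R(\tau_i)$. \textsc{SRD} routes the same candidates. The top-ranked candidate has $u=1\ge\theta_{\mathrm{high}}$, so it is always placed in $\mathcal{K}$, and refinement only ever adds trajectories to $\mathcal{K}$. We will read $R_{\max}^{\mathrm{SRD}}(n)$ as the best reward among everything \textsc{SRD} keeps. It therefore satisfies the pathwise bound
\begin{equation*}
R_{\max}^{\mathrm{SRD}}(n) = \max\bigl(R_{\max}^{\mathrm{RS}}(n),\, \max_{\tau'\in\mathcal{K}_{\mathrm{ref}}} R(\tau')\bigr) \ge R_{\max}^{\mathrm{RS}}(n).
\end{equation*}
This is the weak monotonicity, and it gives $\Delta_n \ge 0$. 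Writing $\Delta_n = \mathbb{E}\bigl[(M'-R_{\max}^{\mathrm{RS}})^+\bigr]$, where $M'$ is the best refined reward, it remains to show that this positive part has strictly positive expectation.

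\textbf{Step 2: strict positivity.} Condition on the event $E$ that at least one candidate lands in the middle tier; $E$ has probability at least $1-(1-p_M)^n>0$ when $p_M>0$. Take a middle-tier $\tau$. By Assumption~\ref{assump:refinement_efficacy} its refinement is routed to \textsc{Keep} with probability at least $\rho$. By Assumption~\ref{assump:stochastic_dominance} its reward stochastically dominates that of a fresh middle-tier draw. Because the reward distribution is non-degenerate (the regularity condition), that fresh draw exceeds the current maximum with positive probability. Independence (Assumption~\ref{assump:independence}) lets these probabilities multiply, so $\mathbb{P}[M'>R_{\max}^{\mathrm{RS}}]>0$, and hence $\Delta_n>0$.

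\textbf{Step 3: the $\Omega$ rate.} We model \textsc{SRD} as producing an effective pool of about $n(1+\rho p_M)$ draws from a distribution that is no worse than $\mathcal{G}$. The $n$ originals are supplemented by a $\mathrm{Bin}(n,p_M)$ number of refinements, each kept with probability at least $\rho$, for an expected $n\rho p_M$ extras. The rate then follows from a standard order-statistic increment bound. For i.i.d.\ draws with reward CDF $F$, we have $\mathbb{E}[\max \text{ of } N] = \int (1-F^N)$. Under a regularity assumption such as bounded support with density bounded below near the top, or comparison to a uniform law, the gap between $N=n$ and $N=n(1+\rho p_M)$ is at least of order $\tfrac{1}{n}-\tfrac{1}{n(1+\rho p_M)} = \tfrac{\rho p_M}{n(1+\rho p_M)}$. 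For uniform rewards this is exact, since $\mathbb{E}[\max]=N/(N+1)$.

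\textbf{Main obstacle.} The hard part is justifying that the refined samples act like additional i.i.d.\ draws at least as good as the originals. Assumption~\ref{assump:stochastic_dominance} only compares refinements with middle-tier samples, not with the unconditional $\mathcal{G}$, and the refined draws are correlated with the ranks. I would handle this by restricting the comparison to the event that the refined draw beats the current maximum, using dominance over a fresh middle-tier draw. I would then apply Jensen to the concave map $N\mapsto N/(N+1)$ over the binomial count, which introduces only a constant-factor loss absorbed in the $\Omega$.
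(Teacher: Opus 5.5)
Your Steps 1 and 2 track the paper's proof: the same coupling on the $n$ initial draws, $R_{\max}^{\mathrm{SRD}}(n)=\max\{X,Z\}$ with $Z$ the best kept refinement, and strict positivity from a kept refinement beating $X$. Your remark that the top-ranked candidate has $u=1$ and is always kept usefully reconciles the statement's $\max_{i\in[n]}$ with the accepted-set maximum that the paper's proof actually uses.

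The gap is in Step 3, exactly where you flag it. Your rate requires treating the kept refinements as extra i.i.d.\ draws, independent of the originals, with a reward law at least as good as $\mathcal{G}$. Assumption~\ref{assump:stochastic_dominance} cannot supply this, because it only compares $R(\textsc{Refine}(\tau))$ with a fresh middle-tier reward $W$. In the population reading of $p_H,p_M,p_L$ that the analysis adopts, $W$ lies below the high-tier cutoff $q_{\mathrm{high}}$. So the dominance bound $\Pr[R(\textsc{Refine}(\tau))>t]\ge\Pr[W>t]=0$ is vacuous for every $t\ge q_{\mathrm{high}}$. Yet $R_{\max}^{\mathrm{RS}}(n)\ge q_{\mathrm{high}}$ except on an event of probability $(1-p_H)^n$. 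Your proposed repair (restrict to the event that the refined draw beats the current maximum, then use dominance over a fresh middle-tier draw) can therefore only capture a contribution living on an exponentially small event. It cannot yield an $\Omega(1/n)$ rate.

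Step 2 has a milder version of the same problem. A fresh middle-tier draw beats the maximum only when no original is high-tier. Moreover, Assumption~\ref{assump:independence} concerns independence across refinement attempts; it does not make a refined reward independent of the originals' maximum, so the probabilities do not simply multiply. For strict positivity you can bypass Assumption~\ref{assump:stochastic_dominance} entirely. In the population reading, let $B=\{\text{no original is high-tier, at least one is mid-tier}\}$, which has probability $(1-p_H)^n-p_L^n>0$ when $p_M>0$. On $B$, any refinement routed to \textsc{Keep} has reward at least $q_{\mathrm{high}}$ and so beats every original. Assumption~\ref{assump:refinement_efficacy} alone then gives $\Pr[Z>X]\ge\rho\Pr[B]$.

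The paper gets past the Step 3 point only by asserting that directly accepted and refined-then-accepted trajectories have exchangeable (i.i.d.) rewards. It then multiplies $\Pr[\text{max from regenerated}]\approx\frac{p_M\rho}{p_H+p_M\rho}$ by the top spacing $\Theta(1/n)$ of Lemma~\ref{lem:order_stat_gap}. That exchangeability is an extra modeling hypothesis, not a consequence of Assumption~\ref{assump:stochastic_dominance}. You need to state one like it explicitly, for example: the kept refinements are i.i.d.\ with a reward law dominating $\mathcal{G}$ and, together with their number $K$, are independent of the originals.

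Granted such a hypothesis, your pool-size comparison is a valid alternative to the paper's spacing argument. It produces the $(1+\rho p_M)$ denominator directly, rather than by weakening $p_H+\rho p_M\le 1+\rho p_M$. Two steps still need fixing.
\begin{itemize}
\item \textbf{Jensen points the wrong way.} For the concave map $g(N)=N/(N+1)$ it gives $\mathbb{E}[g(n+K)]\le g(n+\mathbb{E}K)$, which is an upper bound. Use instead $g(n+K)-g(n)=\frac{K}{(n+1)(n+K+1)}\ge\frac{K}{(n+1)(2n+1)}$. This holds because each original yields at most one kept refinement, so $K\le n$. The expectation is then at least $\frac{n\rho p_M}{(n+1)(2n+1)}\ge\frac{\rho p_M}{6n}$.
\item \textbf{The regularity condition must bound the density near the top from above, not below.} Since $\mathbb{E}[\max_{N_2}]-\mathbb{E}[\max_{N_1}]=\int_0^1(v^{N_1}-v^{N_2})\,\frac{dv}{f(F^{-1}(v))}$, the bound $f\le C$ gives the lower bound $\frac{1}{C}\bigl(\frac{1}{N_1+1}-\frac{1}{N_2+1}\bigr)$. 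By contrast, $f\ge c$ only gives the matching upper bound. Lemma~\ref{lem:order_stat_gap} in the paper has the same slip.
\end{itemize}
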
%
While the complete proof appears in Appendix~\ref{app:proof_theorem_3_5}, we highlight that the improvement $\Delta_n$ decreases with $n$, which may seem counterintuitive. This reflects the diminishing marginal returns of additional candidates, i.e. as $n$ grows, both algorithms are increasingly likely to find near-optimal trajectories, so the gap between them shrinks. Emperical proof has been provided in Appendix~\ref{app:theorem2_details}.

Finally, \textsc{SRD} satisfies two correctness properties that follow directly from the algorithm construction: it \emph{terminates} in at most $n \cdot N_{\mathrm{refine}}$ refinement operations and $O(n \cdot N_{\mathrm{refine}})$ reward evaluations, and it is \emph{weakly monotone} --- the best kept reward $R_t^*$ is non-decreasing in the number of refinement operations $t$, so refinement never degrades the incumbent solution. Formal statements and proofs are in \autoref{app:proofs} (Propositions~\ref{prop:termination} and~\ref{prop:monotonicity}).

\section{Experiments}
\label{sec:experiments}

\subsection{Experimental Settings}
\label{sec:exp_settings}

\paragraph{Tasks and Metrics.}
We evaluate on four datasets spanning reasoning and instruction-following: \textbf{MATH500}~\cite{math500} (mathematical reasoning, accuracy), \textbf{GPQA Diamond}~\cite{gpqa} (scientific QA, accuracy following \citet{liao2025rewardguided}), \textbf{HotpotQA}~\cite{yang2018hotpotqa} (multi-hop QA, EM and F1), and \textbf{AlpacaEval}~\cite{alpaca_eval} (instruction following, GPT-4o-mini win rate following \citet{specrej}).

\paragraph{Generation and Reward Models.}
We pair task-appropriate generation and reward models, using the ten combinations listed in~\autoref{tab:model_combinations}. Reward models are matched to task type: math- and reasoning-specific models for MATH500 and GPQA, a retrieval-augmented QA model for HotpotQA, and general-purpose preference models for AlpacaEval.
We intentionally decouple generation and reward models throughout, so the evaluation signal is external rather than biased toward the generator's own preferences~\cite{panickssery2024llm}. The same generation model serves as both drafter and editor. Prompt templates are in Appendix ~\ref{app:prompts}.

\begin{table}[t]
\centering
\small
\resizebox{\linewidth}{!}{%
\begin{tabular}{l l}
\toprule
\textbf{Generation Model} & \textbf{Reward Model} \\
\midrule
\multicolumn{2}{l}{\emph{MATH500}} \\
Llama-3.1-8B-Instruct & AceMath-7B-RM \\
Qwen2.5-Math-1.5B-Instruct & AceMath-7B-RM \\
\midrule
\multicolumn{2}{l}{\emph{GPQA DIAMOND}} \\
Llama-3.1-8B-Instruct & Skywork-o1-Open-PRM-7B \\
Qwen3-4B-Instruct & Skywork-o1-Open-PRM-7B \\
\midrule
\multicolumn{2}{l}{\emph{AlpacaEval}} \\
Llama-3.1-8B-Instruct & FsfairX-LLaMA3-RM-v0.1 \\
Llama-3.1-8B-Instruct & RM-Mistral-7B \\
Qwen3-4B-Instruct & FsfairX-LLaMA3-RM-v0.1 \\
Qwen3-4B-Instruct & RM-Mistral-7B \\
\midrule
\multicolumn{2}{l}{\emph{HotpotQA}} \\
Llama-3.1-8B-Instruct & Llama3.1-RAG-Reward-v2 \\
\bottomrule
\end{tabular}
}\vspace{1mm}
\caption{Generation and reward model combinations used for each dataset.}
\label{tab:model_combinations}
\end{table}

\paragraph{Baselines.}
We compare against three inference-time decoding baselines, all sharing SRD's generation model, prompt template, and sampling hyperparameters and differing only in decoding strategy: \textbf{Temperature Sampling} ($N{=}1$, a single trajectory); \textbf{Best-of-$N$ (BoN)}, which fully generates $N$ candidates and selects the highest-reward one; and \textbf{Speculative Rejection (Spec-Rej)}~\cite{specrej}, which scores prefixes during generation and permanently terminates low-reward trajectories. Full baseline descriptions are in Appendix ~\ref{app:baselines}.
We exclude Reward-guided Speculative Decoding~\cite{liao2025rewardguided} from the main comparison because it assumes access to a larger target model for regeneration, whereas SRD and all baselines above operate with only a single generative model and a reward model. We provide a controlled comparison with a speculative variant of SRD under RSD's setting in~\autoref{app:proofs}.

\subsection{Main results}
\label{sec:main_results}

\begin{figure*}[tbh]
    \centering
    \includegraphics[width=\linewidth]{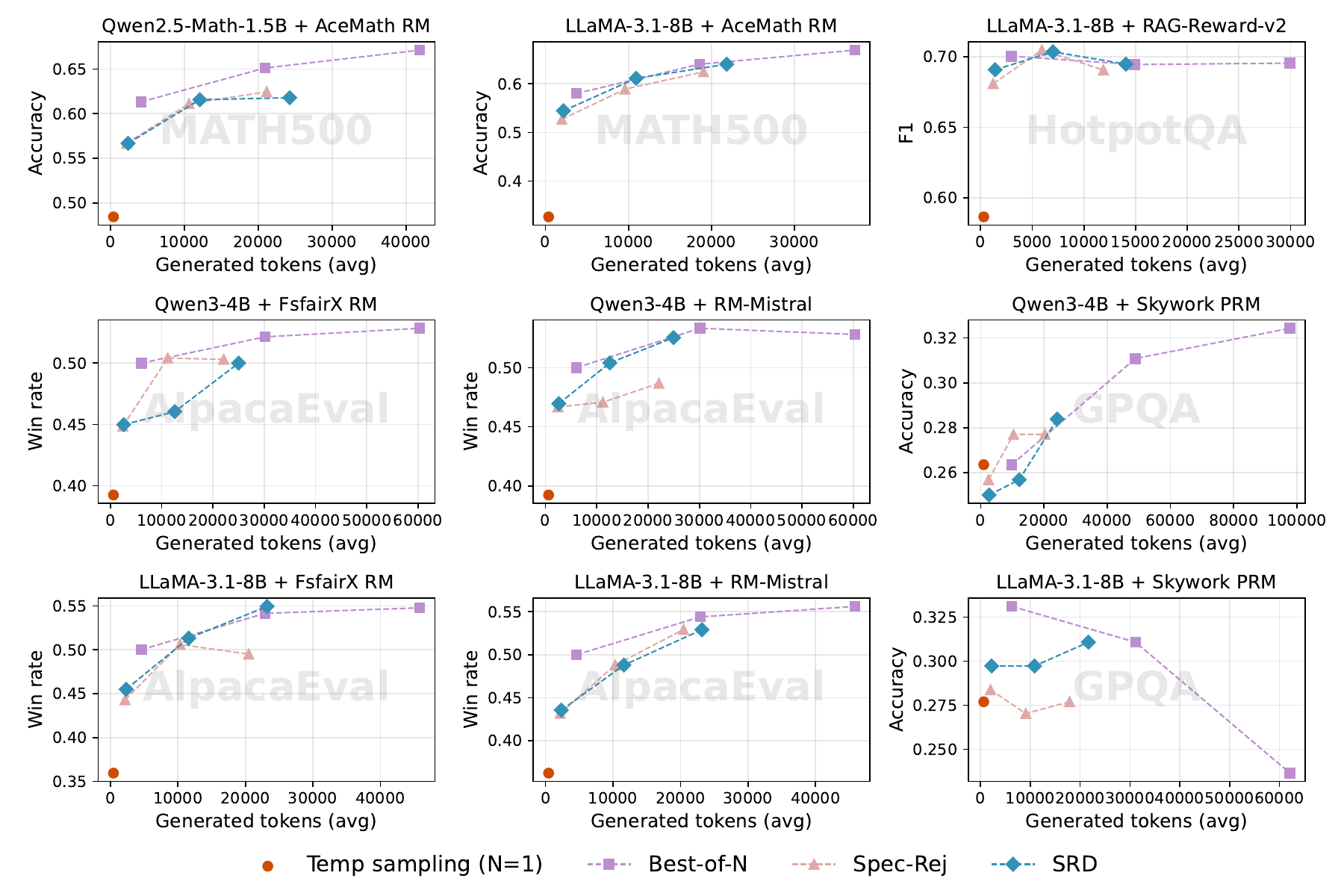}
    \caption{Accuracy-compute trade-offs of inference-time decoding methods across four benchmarks
and multiple generation-reward model combinations.
Compute is measured by the average number of generated tokens from the language model.
SRD consistently achieves competitive performance with respect to Best-of-$N$ and Speculative Rejection under lower or comparable generation budgets by selectively reusing and refining partial trajectories.}
    \label{fig:main_res}
\end{figure*}

\begin{table*}[tbh]
\centering
\resizebox{\linewidth}{!}{
\begin{tabular}{ccccc|ccccc}
\toprule
$N$ & Acc. & Time (s) & In Tok. & Out Tok. &
Drafter (time/calls) & Scorer (time/calls) & Editor (time/calls) & Router (time/calls) \\
\midrule
10  & 0.544 & 7.21  & 6629  & 2166  &
4.73 / 4.1 & 1.29 / 5.6 & 1.18 / 1.0 & \textless 0.01 / 3.8 \\
50  & 0.611 & 18.07 & 32895 & 10927 &
7.43 / 4.2 & 6.98 / 6.6 & 3.59 / 2.2 & \textless 0.01 / 5.3 \\
100 & 0.640 & 29.76 & 65737 & 21840 &
10.96 / 4.2 & 13.53 / 6.7 & 5.13 / 2.3 & \textless 0.01 / 5.5 \\
\bottomrule
\end{tabular}
}\vspace{1mm}
\caption{Component-level breakdown of SRD on MATH500 under different values of $N$.
We report overall accuracy, runtime, token statistics, and per-component time and call frequency
in the format of \textit{time / calls}.}
\label{tab:component_breakdown}
\end{table*}

We evaluate Selective Regenerative Decoding (SRD) on four benchmarks and compare it with several inference-time decoding baselines under an accuracy--compute tradeoff setting.
Throughout this section, we report task-level evaluation metrics
(e.g., accuracy, F1, or win rate), rather than reward scores. All decoding hyperparameters and implementation details are provided in Appendix~\ref{app:details}.

\paragraph{Overall Trends.}
As shown in Figure~\ref{fig:main_res}, SRD consistently forms a distinct tradeoff frontier between the early rejection method (Spec-Rej) and full trajectory sampling (Best-of-N). In particular, SRD achieves accuracy levels comparable to or exceeding Spec-Rej when limited additional computation is permitted.

Rather than operating at a single fixed point, SRD explores a different region of the accuracy--compute tradeoff by enabling segment-level intervention within a single trajectory.
This allows computation to be allocated conditionally, based on intermediate trajectory quality, rather than uniformly across complete candidates.

This trend is particularly evident on reasoning-centric tasks such as MATH500, GPQA, and HotpotQA, where intermediate reasoning errors are common and suggest opportunities for partial trajectory correction.

\paragraph{Mathematical and Reasoning Tasks.}
On \textbf{MATH500} with LLaMA-3.1-8B, SRD achieves 0.544 accuracy at $N{=}10$ using only 2,166 output tokens, scaling to 0.640 at $N{=}100$ with 21,840 tokens (Table~\ref{tab:component_breakdown}). As shown in Figure~\ref{fig:main_res}, these accuracy levels are comparable to Best-of-$N$ at substantially lower token budgets for both Qwen2.5-Math-1.5B and LLaMA-3.1-8B.
While Best-of-$N$ can further improve accuracy by sampling more complete candidates, this comes at the cost of proportionally increased token generation.

On the more challenging \textbf{GPQA} benchmark, overall accuracy is lower across all methods, reflecting task difficulty.
SRD exhibits a more stable accuracy--compute curve in the low- and mid-budget regimes.
Although Best-of-$N$ can achieve higher accuracy under large budgets, its compute cost grows rapidly, indicating diminishing returns from purely increasing candidate diversity.

\paragraph{Multi-hop Question Answering.}
On \textbf{HotpotQA}, SRD attains performance comparable to Best-of-$N$ using a moderate number of generated tokens.
Given that HotpotQA requires multi-step reasoning over multiple pieces of evidence, this result suggests that selectively refining partial trajectories can effectively reduce redundant generation of entire reasoning chains.

\paragraph{Instruction-following Evaluation.}
On \textbf{AlpacaEval}, evaluation is based on preference comparisons and therefore exhibits higher variance.
Despite this, SRD demonstrates a consistent accuracy--compute trend across all four generation--reward model combinations tested (Table~\ref{tab:model_combinations}).
At similar or lower generation budgets, SRD typically achieves win rates comparable to Best-of-$N$
and substantially outperforms single-sample decoding, indicating that SRD remains effective even under noisier reward signals.

\subsection{Empirical Validation of Theoretical Quantities}
\label{sec:empirical_theory}

To directly validate the quantities in Theorem~\ref{thm:sample_efficiency}, we measure the routing distribution and refinement success rate on MATH500 with Qwen2.5-Math-1.5B-Instruct ($N{=}10$).
The accuracy--compute tradeoffs in Figure~\ref{fig:main_res} and the scaling behavior in Table~\ref{tab:component_breakdown} (0.544 at $N{=}10$ to 0.640 at $N{=}100$) provide indirect support for sample efficiency, but do not directly measure $\rho$, $p_H$, or $p_M$; we address this gap here.
We observe $p_H = 0.525$, $p_M = 0.148$, and $p_L = 0.324$, with a refinement efficacy of $\hat{\rho} = 1.0$ (all refinement attempts yielded trajectories routed to \textsc{Keep}).
Substituting into the efficiency gain formula from Theorem~\ref{thm:sample_efficiency} gives $1 + \hat{\rho} \cdot p_M / p_H = 1.28$, indicating that SRD requires approximately 22\% fewer samples than pure rejection sampling to achieve the same success guarantee.
These results are consistent across $N$: at $N{=}20$ and $N{=}30$, we observe $\hat{\rho} = 1.0$ with efficiency gains of $1.35\times$ and $1.36\times$ respectively, as larger candidate pools produce more mid-tier trajectories amenable to refinement.
Extended results across values of $N$ are provided in Appendix~\ref{app:rho_details}.

We additionally validate Theorem~\ref{thm:quality_improvement} by comparing $R_{\max}$ between SRD and rejection sampling under a coupled setting where both methods start from identical initial drafts (via seeded generation).
SRD achieves higher mean $R_{\max}$ than RS at both $N{=}10$ ($\Delta_n = +0.36$) and $N{=}20$ ($\Delta_n = +0.28$), with SRD winning more samples in both cases (23 vs.\ 9 and 27 vs.\ 12, respectively).
Refinement success rates range from 78--100\%, confirming Assumption~\ref{assump:stochastic_dominance}.
Notably, the SRD advantage grows with $N$ ($\Delta_n = +1.29$ at $N{=}30$), as larger candidate pools produce more mid-tier trajectories amenable to refinement.
Full details are provided in Appendix~\ref{app:theorem2_details}.

\subsection{Ablation Studies}
\subsubsection{Effect of Routing Thresholds}
\label{sec:routing_threshold}

We analyze the effect of routing thresholds $(\theta_{\text{high}}, \theta_{\text{low}})$ on SRD using MATH500 with Llama-3.1-8B-Instruct ($N=10$), while keeping all other settings fixed. Table \ref{tab:routing_thresholds} reports representative threshold configurations. Performance degrades when the thresholds are overly aggressive or overly permissive, while the default setting $(0.5, 0.3)$ achieves the best accuracy-compute tradeoff. We use these defaults across all benchmarks and model combinations in the main results (Figure~\ref{fig:main_res}), which provides implicit cross-task validation of this choice.

\begin{table}[t]
\centering
\resizebox{0.8\columnwidth}{!}{
\begin{tabular}{cccc}
\toprule
$\theta_{\text{high}}$ & $\theta_{\text{low}}$ & Accuracy & Out Tokens \\
\midrule
0.5 & 0.3 & \textbf{0.5444} & 2166 \\
0.5 & 0.5 & 0.5400 & 1973 \\
0.5 & 0.2 & 0.5267 & 2325 \\
0.8 & 0.5 & 0.4667 & \textbf{1871} \\
0.7 & 0.3 & 0.4467 & 2091 \\
\bottomrule
\end{tabular}
}\vspace{1mm}
\caption{Effect of routing thresholds on MATH500 (Llama-3.1-8B-Instruct, $N=10$).}
\label{tab:routing_thresholds}
\vspace{-3mm}
\end{table}

\subsubsection{Effect of Refinement Policies}
\label{sec:salvage_policy}

We investigate how different refinement policies---the decision rule applied after regenerating a candidate routed to \textsc{Refine}---affect SRD's behavior.
We vary only the refinement policy and fix all other settings.

\paragraph{Refinement Policy Variants.}
After a candidate routed to \textsc{Refine} undergoes suffix regeneration (Phase~3 in \S\ref{subsec:algorithm_overview}), a policy determines whether the refined candidate is retained. We consider five policies:

\noindent \emph{Reroute (global)} (default) ranks the refined candidate jointly with all active candidates and retains it only if it is routed to \textsc{Keep}.
\emph{Keep on refine} follows the same rerouting procedure but also retains candidates that remain in the \textsc{Refine} tier.
\emph{Self-compare} compares the refined candidate's reward against its pre-refinement reward, retaining it if
$R(\tau') \ge R(\tau) + \delta$, where $\delta$ is a fixed margin ($\delta=0.02$ in this study).
\emph{Force keep} unconditionally retains all refined candidates.
\emph{Refine-BoN} performs $N$ parallel regenerations for each refined candidate and retains the highest-reward result ($N \in \{3, 5, 10\}$).
\begin{table}[t]
\centering
\resizebox{0.8\columnwidth}{!}{
\begin{tabular}{lcc}
\toprule
Refinement Policy & Accuracy & Out Tokens \\
\midrule
\multicolumn{3}{c}{Dataset: MATH500} \\
\midrule
Reroute (global)     & 0.5444 & 2166 \\
Keep on refine       & 0.5333 & 2149 \\
Force keep           & 0.5267 & 2162 \\
Refine-Bo3           & 0.5222 & 2521 \\
Refine-Bo10          & 0.5244 & 3819 \\
Self-compare         & 0.5156 & 2175 \\
Refine-Bo5           & 0.4906 & 2238 \\
\midrule
\multicolumn{3}{c}{Dataset: GPQA} \\
\midrule
Self-compare         & 0.3243 & 2201 \\
Refine-Bo10          & 0.3176 & 4030 \\
Force keep           & 0.3041 & 2212 \\
Reroute (global)     & 0.2973 & 2214 \\
Refine-Bo5           & 0.2635 & 2185 \\
Refine-Bo3           & 0.2635 & 2618 \\
Keep on refine       & 0.2635 & 2185 \\
\bottomrule
\end{tabular}
}\vspace{1mm}
\caption{Effect of different refinement policies on representative reasoning tasks ($N=10$).}
\label{tab:salvage_policy}
\end{table}

\begin{table}[t]
\centering
\resizebox{0.8\columnwidth}{!}{
\begin{tabular}{lcc}
\toprule
Scoring Interval & Accuracy & Out Tokens \\
\midrule
\multicolumn{3}{c}{Dataset: MATH500} \\
\midrule
1   & 0.5533 & 2389 \\
5   & 0.5422 & 2383 \\
10  & 0.5667 & 2387 \\
20  & 0.5556 & 2388 \\
50  & 0.5689 & 2387 \\
100 & 0.5467 & 2390 \\
\midrule
\multicolumn{3}{c}{Dataset: GPQA} \\
\midrule
1   & 0.2365 & 2208 \\
5   & 0.2973 & 2189 \\
10  & 0.2973 & 2214 \\
20  & 0.2500 & 2191 \\
50  & 0.2432 & 2209 \\
100 & 0.2365 & 2218 \\
\bottomrule
\end{tabular}
}\vspace{1mm}
\caption{Ablation on scoring interval (localization granularity), which determines where SRD begins suffix regeneration within a trajectory routed to \textsc{Refine}.}
\label{tab:scoring_interval}
\end{table}

\paragraph{Results.}
Table~\ref{tab:salvage_policy} reports results on MATH500 and GPQA. We use Llama-3.1-8B-Instruct as the generation model; for reward models, we use AceMath-7B-RM on MATH500 and Skywork-o1-Open-PRM on GPQA. On MATH500, where the reward model provides relatively stable and well-calibrated feedback, the conservative \emph{Reroute (global)} strategy achieves the highest accuracy while maintaining low generation cost. More permissive strategies and aggressive refinement variants with internal Best-of-$N$ substantially increase generation without improving accuracy. In contrast, on GPQA, which relies on a noisier process reward model, the locally grounded \emph{Self-compare} strategy achieves the best performance while using fewer generated tokens. Global rerouting-based strategies perform significantly worse, suggesting that global candidate ranking is less reliable under noisy reward signals. Notably, across both regimes, increasing refinement aggressiveness or internal sampling (Refine-BoN) does not improve accuracy and often increases token cost substantially.

\subsubsection{Effect of Scoring Interval}
\label{sec:ablation_scoring_interval}

\paragraph{Mechanism.}
The \textit{scoring interval} controls the granularity at which SRD localizes the regeneration boundary $j^*$ within a trajectory routed to \textsc{Refine} (see Phase~3 in \S\ref{subsec:algorithm_overview}).
Smaller intervals enable finer-grained localization but are more sensitive to local reward fluctuations; larger intervals are more stable but may delay detection of the degradation point.

\paragraph{Observation.}
Across both GPQA and MATH500, as shown in Table \ref{tab:scoring_interval}, extremely small intervals lead to unstable or premature regeneration, while overly large intervals allow errors to propagate before being detected.
A moderate interval consistently yields the most reliable refinement, suggesting a favorable balance between localization precision and reward stability.

\section{Conclusion}

We introduced Selective Regenerative Decoding (SRD), which routes candidate trajectories to keep, refine, or discard and selectively regenerates only degraded suffixes while preserving useful prefixes---achieving a provable $(1 + \rho \cdot p_M/p_H)\times$ sample efficiency gain over rejection sampling. Across four benchmarks, SRD consistently matches Best-of-$N$ accuracy with fewer tokens and outperforms speculative rejection in low-compute regimes, with ablations showing that refinement effectiveness depends critically on reward model calibration. Key limitations include reliance on fixed routing thresholds and heuristic boundary selection; learning these end-to-end, as well as extending SRD to settings with adaptive or learned reward signals, are promising directions for future work.

\section*{Limitations}
SRD requires coordinating multiple components, including a generation model, a reward model, and an editing mechanism, which may increase implementation overhead and inference-time latency in resource-constrained settings.

Moreover, the effectiveness of SRD depends on the quality of the reward model used for routing. Inaccurate or misaligned reward estimates may lead to suboptimal acceptance or salvage decisions.

SRD currently relies on fixed routing thresholds and heuristic boundary selection. Learning these policies end-to-end is an interesting direction for future work.

\section*{Ethical Considerations}

\paragraph{Directed regeneration and bias amplification.}
Unlike Best-of-$N$, which only \emph{selects} among samples the model would have produced anyway, SRD \emph{rewrites} content to raise reward. This gives reward model biases more leverage: systematic preferences over length, style, or assertiveness are actively regenerated toward rather than merely selected for, and repeated refinement can compound the effect. We caution against deploying SRD with a reward model that has not been audited on the target distribution.

\paragraph{Reward models as the optimization target.}
SRD's guarantees are stated with respect to the reward model $R$, not task correctness: weak monotonicity (Proposition~\ref{prop:monotonicity}) ensures the best \emph{scored} trajectory never degrades, which does not imply improved factual accuracy or safety. When $R$ is misaligned with the intended objective, SRD faithfully optimizes the proxy, and the mechanism that recovers a correct suffix can equally entrench a confidently wrong one that the reward model happens to favor. Our GPQA results show this concretely: under a noisier process reward model, trusting global reward rankings is measurably worse than trusting only local comparisons (\S\ref{sec:salvage_policy}). We therefore report task-level metrics rather than reward scores throughout \S\ref{sec:experiments}.

\section*{Impact Statement}

This paper presents Selective Regenerative Decoding (SRD), an inference-time decoding framework aimed at improving the efficiency and reliability of large language model reasoning. By enabling localized refinement within generated trajectories, SRD may help reduce redundant computation and improve performance in long-form reasoning tasks.

However, SRD requires coordinating multiple components, including a generation model, a reward model, and an editing mechanism, which may increase implementation complexity and inference-time latency in resource-constrained settings.

Moreover, SRD relies on reward model feedback for routing decisions. Biases, miscalibration, or misalignment in reward estimates could lead to suboptimal refinements, which is particularly important to consider in high-stakes or safety-critical applications.

SRD currently uses fixed routing thresholds and heuristic boundary selection, and future work may explore more adaptive or learned routing policies. Overall, this work is intended as a general decoding-time contribution, and its broader impacts will depend on the downstream deployment context and the reliability of the reward signals used.

\bibliography{custom}

@article{renyi1953theory,
  title={On the theory of order statistics},
  author={R{\'e}nyi, Alfr{\'e}d},
  journal={Acta Math. Acad. Sci. Hung},
  year={1953}
}

@book{david2004order,
  title={Order statistics},
  author={David, Herbert A and Nagaraja, Haikady N},
  year={2004},
  publisher={John Wiley \& Sons}
}

@article{panickssery2024llm,
  title={Llm evaluators recognize and favor their own generations},
  author={Panickssery, Arjun and Bowman, Samuel and Feng, Shi},
  journal={Advances in Neural Information Processing Systems},
  volume={37},
  pages={68772--68802},
  year={2024}
}

@book{cover1999elements,
  title={Elements of information theory},
  author={Cover, Thomas M},
  year={1999},
  publisher={John Wiley \& Sons}
}

@article{ackley1985learning,
  title={A learning algorithm for Boltzmann machines},
  author={Ackley, David H and Hinton, Geoffrey E and Sejnowski, Terrence J},
  journal={Cognitive science},
  volume={9},
  number={1},
  pages={147--169},
  year={1985},
  publisher={Elsevier}
}

@inproceedings{huang-etal-2025-deal,
    title = "{D}e{AL}: Decoding-time Alignment for Large Language Models",
    author = "Huang, James Y.  and
      Sengupta, Sailik  and
      Bonadiman, Daniele  and
      Lai, Yi-An  and
      Gupta, Arshit  and
      Pappas, Nikolaos  and
      Mansour, Saab  and
      Kirchhoff, Katrin  and
      Roth, Dan",
    editor = "Che, Wanxiang  and
      Nabende, Joyce  and
      Shutova, Ekaterina  and
      Pilehvar, Mohammad Taher",
    booktitle = "Proceedings of the 63rd Annual Meeting of the Association for Computational Linguistics (Volume 1: Long Papers)",
    month = jul,
    year = "2025",
    address = "Vienna, Austria",
    publisher = "Association for Computational Linguistics",
    url = "https://aclanthology.org/2025.acl-long.1274/",
    doi = "10.18653/v1/2025.acl-long.1274",
    pages = "26280--26300",
    ISBN = "979-8-89176-251-0"
}

@inproceedings{khanov2024args,
    title={{ARGS}: Alignment as Reward-Guided Search},
    author={Maxim Khanov and Jirayu Burapacheep and Yixuan Li},
    booktitle={The Twelfth International Conference on Learning Representations},
    year={2024},
    url={https://openreview.net/forum?id=shgx0eqdw6}
}

@article{su2022contrastive,
  title={A contrastive framework for neural text generation},
  author={Su, Yixuan and Lan, Tian and Wang, Yan and Yogatama, Dani and Kong, Lingpeng and Collier, Nigel},
  journal={Advances in Neural Information Processing Systems},
  volume={35},
  pages={21548--21561},
  year={2022}
}

@article{willard2023efficient,
  title={Efficient guided generation for large language models},
  author={Willard, Brandon T and Louf, R{\'e}mi},
  journal={arXiv preprint arXiv:2307.09702},
  year={2023}
}

@inproceedings{roy2024flap,
  title = "{FLAP}: Flow-Adhering Planning with Constrained Decoding in {LLM}s",
    author = "Roy, Shamik  and
      Sengupta, Sailik  and
      Bonadiman, Daniele  and
      Mansour, Saab  and
      Gupta, Arshit",
    editor = "Duh, Kevin  and
      Gomez, Helena  and
      Bethard, Steven",
    booktitle = "Proceedings of the 2024 Conference of the North American Chapter of the Association for Computational Linguistics: Human Language Technologies (Volume 1: Long Papers)",
    month = jun,
    year = "2024",
    address = "Mexico City, Mexico",
    publisher = "Association for Computational Linguistics",
    url = "https://aclanthology.org/2024.naacl-long.29/",
    doi = "10.18653/v1/2024.naacl-long.29",
    pages = "517--539"
}

@inproceedings{nakshatri2025constrained,
  title={Constrained decoding with speculative lookaheads},
  author={Nakshatri, Nishanth Sridhar and Roy, Shamik and Das, Rajarshi and Chaidaroon, Suthee and Boytsov, Leonid and Gangadharaiah, Rashmi},
  booktitle={Proceedings of the 2025 Conference of the Nations of the Americas Chapter of the Association for Computational Linguistics: Human Language Technologies (Volume 1: Long Papers)},
  pages={4681--4700},
  year={2025}
}

@article{yao2023tree,
  title={Tree of thoughts: Deliberate problem solving with large language models},
  author={Yao, Shunyu and Yu, Dian and Zhao, Jeffrey and Shafran, Izhak and Griffiths, Tom and Cao, Yuan and Narasimhan, Karthik},
  journal={Advances in neural information processing systems},
  volume={36},
  pages={11809--11822},
  year={2023}
}

@article{bertsch2023s,
  title={It's MBR all the way down: Modern generation techniques through the lens of minimum Bayes risk},
  author={Bertsch, Amanda and Xie, Alex and Neubig, Graham and Gormley, Matthew R},
  journal={arXiv preprint arXiv:2310.01387},
  year={2023}
}

@article{wan2023faithfulness,
  title={Faithfulness-aware decoding strategies for abstractive summarization},
  author={Wan, David and Liu, Mengwen and McKeown, Kathleen and Dreyer, Markus and Bansal, Mohit},
  journal={arXiv preprint arXiv:2303.03278},
  year={2023}
}

@article{wang2023measuring,
  title={Measuring and mitigating constraint violations of in-context learning for utterance-to-api semantic parsing},
  author={Wang, Shufan and Jean, Sebastien and Sengupta, Sailik and Gung, James and Pappas, Nikolaos and Zhang, Yi},
  journal={arXiv preprint arXiv:2305.15338},
  year={2023}
}

@inproceedings{och2001efficient,
  title={An efficient A* search algorithm for statistical machine translation},
  author={Och, Franz Josef and Ueffing, Nicola and Ney, Hermann},
  booktitle={Proceedings of the ACL 2001 Workshop on Data-Driven Methods in Machine Translation},
  year={2001}
}

@inproceedings{lu2022neurologic,
  title={Neurologic a* esque decoding: Constrained text generation with lookahead heuristics},
  author={Lu, Ximing and Welleck, Sean and West, Peter and Jiang, Liwei and Kasai, Jungo and Khashabi, Daniel and Le Bras, Ronan and Qin, Lianhui and Yu, Youngjae and Zellers, Rowan and others},
  booktitle={Proceedings of the 2022 Conference of the North American Chapter of the Association for Computational Linguistics: Human Language Technologies},
  pages={780--799},
  year={2022}
}

@inproceedings{freitag-al-onaizan-2017-beam,
    title = "Beam Search Strategies for Neural Machine Translation",
    author = "Freitag, Markus  and
      Al-Onaizan, Yaser",
    editor = "Luong, Thang  and
      Birch, Alexandra  and
      Neubig, Graham  and
      Finch, Andrew",
    booktitle = "Proceedings of the First Workshop on Neural Machine Translation",
    month = aug,
    year = "2017",
    address = "Vancouver",
    publisher = "Association for Computational Linguistics",
    url = "https://aclanthology.org/W17-3207/",
    doi = "10.18653/v1/W17-3207",
    pages = "56--60"
}

@article{bon1,
  title={WebGPT: Browser-assisted question-answering with human feedback},
  author={Reiichiro Nakano and Jacob Hilton and Suchir Balaji and Jeff Wu and Ouyang Long and Christina Kim and Christopher Hesse and Shantanu Jain and Vineet Kosaraju and William Saunders and Xu Jiang and Karl Cobbe and Tyna Eloundou and Gretchen Krueger and Kevin Button and Matthew Knight and Benjamin Chess and John Schulman},
  journal={ArXiv},
  year={2021},
  volume={abs/2112.09332},
  url={https://api.semanticscholar.org/CorpusID:245329531}
}

@misc{bon2,
      title={Learning to summarize from human feedback}, 
      author={Nisan Stiennon and Long Ouyang and Jeff Wu and Daniel M. Ziegler and Ryan Lowe and Chelsea Voss and Alec Radford and Dario Amodei and Paul Christiano},
      year={2022},
      eprint={2009.01325},
      archivePrefix={arXiv},
      primaryClass={cs.CL},
      url={https://arxiv.org/abs/2009.01325}, 
}

@inproceedings{
specrej,
title={Fast Best-of-N Decoding via Speculative Rejection},
author={Hanshi Sun and Momin Haider and Ruiqi Zhang and Huitao Yang and Jiahao Qiu and Ming Yin and Mengdi Wang and Peter Bartlett and Andrea Zanette},
booktitle={The Thirty-eighth Annual Conference on Neural Information Processing Systems},
year={2024},
url={https://openreview.net/forum?id=348hfcprUs}
}

@inproceedings{
liao2025rewardguided,
title={Reward-Guided Speculative Decoding for Efficient {LLM} Reasoning},
author={Baohao Liao and Yuhui Xu and Hanze Dong and Junnan Li and Christof Monz and Silvio Savarese and Doyen Sahoo and Caiming Xiong},
booktitle={Forty-second International Conference on Machine Learning},
year={2025},
url={https://openreview.net/forum?id=AVeskAAETB}
}

@article{nucleus,
  title={The curious case of neural text degeneration},
  author={Holtzman, Ari and Buys, Jan and Du, Li and Forbes, Maxwell and Choi, Yejin},
  journal={arXiv preprint arXiv:1904.09751},
  year={2019}
}

@inproceedings{topk,
    title = "Hierarchical Neural Story Generation",
    author = "Fan, Angela  and
      Lewis, Mike  and
      Dauphin, Yann",
    editor = "Gurevych, Iryna  and
      Miyao, Yusuke",
    booktitle = "Proceedings of the 56th Annual Meeting of the Association for Computational Linguistics (Volume 1: Long Papers)",
    month = jul,
    year = "2018",
    address = "Melbourne, Australia",
    publisher = "Association for Computational Linguistics",
    url = "https://aclanthology.org/P18-1082/",
    doi = "10.18653/v1/P18-1082",
    pages = "889--898"
}

@inproceedings{mudgal2023controlled,
title={Controlled Decoding from Language Models},
author={Sidharth Mudgal and Jong Lee and Harish Ganapathy and YaGuang Li and Tao Wang and Yanping Huang and Zhifeng Chen and Heng-Tze Cheng and Michael Collins and Jilin Chen and Alex Beutel and Ahmad Beirami},
booktitle={Socially Responsible Language Modelling Research},
year={2023},
url={https://openreview.net/forum?id=jo57H1CpD8}
}

@inproceedings{spec-decoding,
  title={Fast inference from transformers via speculative decoding},
  author={Leviathan, Yaniv and Kalman, Matan and Matias, Yossi},
  booktitle={International Conference on Machine Learning},
  pages={19274--19286},
  year={2023},
  organization={PMLR}
}

@article{math500,
  title={Measuring Mathematical Problem Solving With the MATH Dataset},
  author={Dan Hendrycks and Collin Burns and Saurav Kadavath and Akul Arora and Steven Basart and Eric Tang and Dawn Song and Jacob Steinhardt},
  journal={NeurIPS},
  year={2021}
}

@inproceedings{gpqa,
  title={Gpqa: A graduate-level google-proof q\&a benchmark},
  author={Rein, David and Hou, Betty Li and Stickland, Asa Cooper and Petty, Jackson and Pang, Richard Yuanzhe and Dirani, Julien and Michael, Julian and Bowman, Samuel R},
  booktitle={First Conference on Language Modeling},
  year={2024}
}

@misc{alpaca_eval,
  author = {Xuechen Li and Tianyi Zhang and Yann Dubois and Rohan Taori and Ishaan Gulrajani and Carlos Guestrin and Percy Liang and Tatsunori B. Hashimoto },
  title = {AlpacaEval: An Automatic Evaluator of Instruction-following Models},
  year = {2023},
  month = {5},
  publisher = {GitHub},
  journal = {GitHub repository},
  howpublished = {\url{https://github.com/tatsu-lab/alpaca_eval}}
}

@inproceedings{yang2018hotpotqa,
  title={{HotpotQA}: A Dataset for Diverse, Explainable Multi-hop Question Answering},
  author={Yang, Zhilin and Qi, Peng and Zhang, Saizheng and Bengio, Yoshua and Cohen, William W. and Salakhutdinov, Ruslan and Manning, Christopher D.},
  booktitle={Conference on Empirical Methods in Natural Language Processing ({EMNLP})},
  year={2018}
}

\newpage
\appendix
\onecolumn

\section{Experimental Settings (Extended)}
\label{app:exp_settings_full}

This appendix expands the experimental settings summarized in \S\ref{sec:exp_settings}.

\subsection{Baseline Descriptions}
\label{app:baselines}

To ensure a fair comparison, unless otherwise specified, all methods share the same generation model, prompt template, and sampling hyperparameters (e.g., temperature), and differ only in their decoding and selection strategies.

\paragraph{Temperature Sampling ($N = 1$).}
This baseline generates a single trajectory under a fixed temperature setting and serves as the simplest sampling-based baseline. When the temperature is set to 0, this method reduces to strictly greedy decoding. In our experiments, we use the same non-zero temperature as other methods to ensure comparability.

\paragraph{Best-of-$N$ (BoN).}
Best-of-$N$ independently generates $N$ full-length candidate sequences and selects the one with the highest reward score. While this approach increases candidate diversity and often improves final output quality, it incurs substantial generation cost, as all $N$ candidates must be fully generated before selection.

\paragraph{Speculative Rejection (Spec-Rej).}
Speculative Rejection evaluates candidate prefixes during generation using a reward model and terminates low-reward trajectories early, thereby reducing generation cost compared to Best-of-$N$. This method adopts a binary accept-or-reject strategy: once a trajectory is rejected, it is permanently discarded and no longer extended.

\section{Component Breakdown and Runtime Analysis}
\label{sec:component_breakdown}

To better understand the computational characteristics of SRD, we provide a component-level
breakdown on MATH500 under different values of $N$ (Table~\ref{tab:component_breakdown}).
We report task performance, runtime, and per-component time and call frequency.
The four components correspond to the phases in \S\ref{subsec:algorithm_overview}:
the \emph{drafter} is the generative model $\mathcal{G}$ that produces candidate trajectories (Phase~1);
the \emph{scorer} is the reward model $R$ used for both routing decisions and regeneration boundary detection (Phases~2--3);
the \emph{router} applies the threshold-based routing logic (Phase~2);
and the \emph{editor} performs suffix regeneration for candidates routed to \textsc{Refine} (Phase~3).

\paragraph{Overall trends.}
As $N$ increases, accuracy consistently improves, while the overall runtime grows moderately.
Importantly, the number of component calls remains largely stable across different $N$,
indicating that SRD does not incur additional structural overhead as candidate exploration increases.
Instead, the increased cost mainly comes from processing longer trajectories rather than invoking
components more frequently.

\paragraph{Cost distribution across components.}
The runtime is dominated by the scorer, followed by the editor, while the router contributes
negligible overhead.
This suggests that SRD's computational cost is primarily driven by reward evaluation and localized
rewriting, whereas routing decisions themselves are lightweight.
Such a cost profile is desirable in practice, as it enables selective intervention without
introducing expensive control logic.

\paragraph{Selective editing behavior.}
Even with larger $N$, the editor is invoked only a small number of times on average,
demonstrating that SRD performs localized edits selectively rather than rewriting entire trajectories.
This behavior distinguishes SRD from naive Best-of-$N$ approaches and helps explain its favorable
accuracy--cost trade-off.

\section{Proofs}
\label{app:proofs}

This appendix provides complete proofs for all theoretical results presented in Section~\ref{subsec:theoretical_analysis}. We begin by clarifying the population-level interpretation of the routing probabilities, then establish several auxiliary lemmas before proving the main theorems.

\subsection{Population-Level Interpretation of \texorpdfstring{$p_H$, $p_M$, $p_L$}{p\_H, p\_M, p\_L}}
\label{app:population_limit}

\begin{remark}
\label{remark:population-level-definition}
The quantities $p_H$, $p_M$, and $p_L$ defined in Equations~\eqref{eq:p_high}--\eqref{eq:p_low} should be interpreted as probabilities under the population distribution -- i.e., the probability that the reward model's score for a random trajectory from $\mathcal{G}$, when compared against the population of all possible outputs, falls in each tier. The algorithm's rank-based scoring $u(\tau) = 1 - r(\tau)/(n-1)$ is a finite-sample approximation to this population quantity. For large $n$, the empirical rank converges to the population quantile by the Glivenko-Cantelli theorem. The theoretical analysis operates in the population limit; the algorithm provides a consistent estimator.
\end{remark}

\subsection{Preliminary Lemmas}

\begin{lemma}[Geometric Failure Probability]
\label{lem:geometric_failure}
Let $X_1, X_2, \ldots, X_N$ be independent Bernoulli random variables with $\Pr[X_i = 1] = p$ for some $p \in (0, 1)$. Then:
\begin{equation}
    \Pr\left[\sum_{i=1}^{N} X_i = 0\right] = (1 - p)^N.
\end{equation}
Furthermore, to ensure $\Pr[\sum_{i=1}^{N} X_i \geq 1] \geq 1 - \delta$ for some $\delta \in (0, 1)$, it is necessary and sufficient that:
\begin{equation}
    \begin{aligned}
        N &\geq \frac{\ln(1/\delta)}{-\ln(1 - p)}\\
          &\geq \frac{\ln(1/\delta)}{p}.
    \end{aligned}
\end{equation}
\end{lemma}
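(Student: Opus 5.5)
The plan is to prove the lemma in two steps: first compute the failure probability exactly, then invert it to get the sample-size condition, and finally relax it with an elementary logarithm inequality.

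\textbf{Step 1 (exact failure probability).} The event $\{\sum_{i=1}^N X_i = 0\}$ is the intersection $\bigcap_{i=1}^N \{X_i = 0\}$, since each $X_i\in\{0,1\}$. By independence,
\[
\Pr\Bigl[\textstyle\sum_i X_i = 0\Bigr]=\prod_{i=1}^N \Pr[X_i=0]=(1-p)^N .
\]

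\textbf{Step 2 (inverting the requirement).} The guarantee $\Pr[\sum_i X_i\ge 1]\ge 1-\delta$ is equivalent to $(1-p)^N\le\delta$. Because $p\in(0,1)$, we have $\ln(1-p)<0$. Taking logarithms therefore gives $N\ln(1-p)\le\ln\delta$. Dividing by the negative quantity $\ln(1-p)$ flips the inequality and yields
\[
N\ge \frac{\ln(1/\delta)}{-\ln(1-p)} .
\]
Every step here is an equivalence, because $\ln$ is strictly increasing and the divisor has a definite sign. This gives both the necessity and the sufficiency claimed.

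\textbf{Step 3 (the relaxed bound).} The second displayed inequality rests on $-\ln(1-p)\ge p$, which follows from $\ln(1-p)\le -p$. That inequality comes from concavity of $\ln$, or equivalently from $1-p\le e^{-p}$. Since $\ln(1/\delta)>0$, taking reciprocals gives
\[
\frac{\ln(1/\delta)}{-\ln(1-p)}\le\frac{\ln(1/\delta)}{p}.
\]

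\textbf{Main obstacle.} The delicate point is the direction of this last inequality. The chained display, read literally, asserts $\frac{\ln(1/\delta)}{-\ln(1-p)}\ge\frac{\ln(1/\delta)}{p}$, but Step 3 shows the reverse holds. So $\ln(1/\delta)/p$ is a \emph{sufficient} sample size: any $N\ge \ln(1/\delta)/p$ also satisfies the exact condition. It is not a lower bound on the exact threshold.

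I would therefore state the conclusion carefully. The exact necessary-and-sufficient condition is $N\ge\ln(1/\delta)/(-\ln(1-p))$. The quantity $\ln(1/\delta)/p$ is a sufficient (conservative) bound. The two agree to first order as $p\to0$, since $-\ln(1-p)=p+O(p^2)$, which is the regime used when the lemma is applied in Theorem~\ref{thm:sample_efficiency}.
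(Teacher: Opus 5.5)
Your Steps 1 and 2 coincide with the paper's proof: a product over independent coordinates, then logarithms and division by $\ln(1-p)<0$. In Step 3 you part ways with the paper, and you are the one who is right. The paper asserts $-\ln(1-p)\le p$ and justifies it by $\ln(1-p)\ge -p/(1-p)\ge -p$; the second step is false for every $p\in(0,1)$, since $p/(1-p)>p$. It then cites the series $-\ln(1-p)=p+p^2/2+p^3/3+\cdots\ge p$, which actually proves your inequality $-\ln(1-p)\ge p$. So the second inequality of the displayed chain is reversed, and the ``necessary'' half of the lemma fails for the bound $\ln(1/\delta)/p$. For a concrete counterexample take $p=\delta=1/2$: already $N=1$ gives failure probability $1/2\le\delta$, whereas $\ln(1/\delta)/p=2\ln 2>1$. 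Your restatement is the correct version of the lemma: the exact criterion is $N\ge\ln(1/\delta)/(-\ln(1-p))$, and $\ln(1/\delta)/p$ is only a sufficient sample size. It follows that the ``requires at least'' bounds in parts (i)--(ii) of Theorem~\ref{thm:sample_efficiency} should be read as sufficiency bounds.

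One correction to your closing remark: the paper does not work in the small-$p$ regime. It reports $p_H\approx 0.52$, so first-order agreement as $p\to 0$ is not the right justification for keeping the ratio. A cleaner fact is that $x\mapsto -\ln(1-x)/x=1+x/2+x^2/3+\cdots$ is increasing on $(0,1)$. Hence for $0<p_H\le p_{\mathrm{eff}}<1$, the ratio of exact thresholds $\ln(1-p_{\mathrm{eff}})/\ln(1-p_H)$ (ignoring integer rounding) is at least $p_{\mathrm{eff}}/p_H=1+\rho p_M/p_H$. The paper's gain formula therefore survives as a lower bound on the gain in exact thresholds, roughly $1.50$ versus $1.28$ at the reported $N=10$ values.
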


\begin{proof}
The first claim follows directly from independence:
\begin{equation}
    \Pr\left[\sum_{i=1}^{N} X_i = 0\right] = \prod_{i=1}^{N} \Pr[X_i = 0] = \prod_{i=1}^{N} (1 - p) = (1 - p)^N. \nonumber
\end{equation}

For the second claim, we require $(1 - p)^N \leq \delta$. Taking logarithms:
\begin{equation}
    N \ln(1 - p) \leq \ln(\delta), \nonumber
\end{equation}
and since $\ln(1 - p) < 0$ for $p \in (0, 1)$:
\begin{equation}
    N \geq \frac{\ln(1/\delta)}{-\ln(1 - p)} \nonumber
\end{equation}

 Inequality $-\ln(1 - p) \leq p$ follows from the concavity of the logarithm: for $p \in [0, 1)$, we have $\ln(1 - p) \geq -p/(1-p) \geq -p$ when $p \leq 1/2$, and direct computation confirms the bound for $p > 1/2$. Alternatively, this follows from the Taylor expansion $-\ln(1-p) = p + p^2/2 + p^3/3 + \cdots \geq p$. Thus:
\begin{equation}
    \begin{aligned}
        N &\geq \frac{\ln(1/\delta)}{-\ln(1 - p)}\\
          &\geq \frac{\ln(1/\delta)}{p}.
    \end{aligned}
    \nonumber \qedhere
\end{equation}
\end{proof}

\begin{lemma}[Effective Acceptance Probability]
\label{lem:effective_acceptance}
Under Assumptions~\ref{assump:refinement_efficacy} and~\ref{assump:independence}, the probability that a single trajectory sampled from $\mathcal{G}$ is eventually accepted by \textsc{SRD} (either directly or after Regeneration) is at least:
\begin{equation}
    p_{\mathrm{eff}} = p_H + \rho \cdot p_M,
\end{equation}
where $p_H$, $p_M$, and $\rho$ are defined in Equations~\eqref{eq:p_high}--\eqref{eq:p_mid} and Assumption~\ref{assump:refinement_efficacy}, respectively.
\end{lemma}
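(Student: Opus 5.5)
The plan is to condition on the initial routing tier of a single trajectory $\tau \sim \mathcal{G}$ and apply the law of total probability over the partition $\{H, M, L\}$ induced by Equations~\eqref{eq:p_high}--\eqref{eq:p_low}. Working in the population limit of Remark~\ref{remark:population-level-definition}, the event that $\tau$ lands in each tier has probability $p_H$, $p_M$, $p_L$. Let $A$ denote the event that $\tau$ is eventually placed in $\mathcal{K}$. Then
\begin{equation}
\Pr[A] = p_H \Pr[A \mid H] + p_M \Pr[A \mid M] + p_L \Pr[A \mid L]. \nonumber
\end{equation}
The three conditional terms are handled one at a time.

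\emph{Tier $H$.} By the construction of Algorithm~\ref{alg:srd_search}, a trajectory with $u(\tau) \geq \theta_{\mathrm{high}}$ is inserted into $\mathcal{K}$ immediately and never removed. Hence $\Pr[A \mid H] = 1$.

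\emph{Tier $L$.} A trajectory with $u(\tau) \le \theta_{\mathrm{low}}$ is discarded. For a lower bound we only need $\Pr[A \mid L] \geq 0$, which is trivial.

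\emph{Tier $M$.} A trajectory in the middle tier is pushed onto $\mathcal{R}$ with count $0$. Provided $N_{\mathrm{refine}} \geq 1$, it receives at least one refinement attempt. By Assumption~\ref{assump:refinement_efficacy}, applied conditionally on the current $\tau$ lying in the refinable region, the first attempt yields $\textsc{Route}(\textsc{Refine}(\tau)) = \textsc{Keep}$ with probability at least $\rho$. In that case $\tau'$ enters $\mathcal{K}$, so $\Pr[A \mid M] \geq \rho$. Subsequent attempts only add probability mass, since the acceptance events across attempts are disjoint first-success events; by Assumption~\ref{assump:independence} they would even give the sharper bound $1-(1-\rho)^{N_{\mathrm{refine}}}$. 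For the stated claim, however, the single-attempt bound is enough.

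Combining the three terms gives $\Pr[A] \geq p_H + \rho\, p_M = p_{\mathrm{eff}}$.

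The main obstacle is the middle-tier step, specifically justifying that the bound $\rho$ from Assumption~\ref{assump:refinement_efficacy}, which is stated pointwise for each refinable $\tau$, integrates to $\rho$ over the conditional law of $\tau$ given tier $M$. It must also be checked that rank-based re-routing against $\mathcal{K} \cup \mathcal{R} \cup \{\tau'\}$ matches the routing notion used in the assumption. I will resolve this by reading the assumption in the population sense of Remark~\ref{remark:population-level-definition}, where $u$ is a fixed function of the reward. The bound then holds almost surely conditional on $\tau$, and averaging it via the tower property gives $\Pr[A \mid M] \geq \rho$. Independence (Assumption~\ref{assump:independence}) is then only needed to extend the per-trajectory statement to the $N$ i.i.d.\ samples in Theorem~\ref{thm:sample_efficiency}, where Lemma~\ref{lem:geometric_failure} is applied with $p = p_{\mathrm{eff}}$.
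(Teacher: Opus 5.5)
Your proposal is correct and is essentially the paper's argument: split on the initial tier, credit $p_H$ for direct keeps and at least $\rho\,p_M$ for a successful first refinement of a middle-tier candidate, and add the disjoint cases via the law of total probability. Your middle-tier step is a cleaner justification than the paper's appeal to Assumption~\ref{assump:independence}, since it uses the tower property over the pointwise bound in Assumption~\ref{assump:refinement_efficacy} and states the $N_{\mathrm{refine}}\ge 1$ caveat explicitly; you are also right not to rely on the sharper $1-(1-\rho)^{N_{\mathrm{refine}}}$ bound, which (in your aside as in the paper's closing note) tacitly assumes that failed refinements remain in the middle tier rather than being dropped as \textsc{Discard} by Algorithm~\ref{alg:srd_search}.
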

\begin{proof}
A trajectory $\tau \sim \mathcal{G}$ is eventually accepted if:

(a) $u(\tau) \geq \theta_{\mathrm{high}}$, i.e., it is directly routed to \emph{Keep}. This occurs with probability $p_H$ by Eq.(6).

(b) $\theta_{\mathrm{low}} < u(\tau) < \theta_{\mathrm{high}}$ routed to \emph{Refine}, probability $p_M$ by Eq.(7)), AND the refinement procedure produces an acceptable trajectory (probability $\geq \rho$ by Assumption3.1).

Events (a) and (b) are mutually exclusive since the regions $\{u \geq \theta_{\mathrm{high}}\}$ and $\{\theta_{\mathrm{low}} < u < \theta_{\mathrm{high}}\}$ are disjoint.
By Assumption 3.2 (independence), the refinement outcome in case (b) is independent of the initial routing. Therefore, by the law of total probability:
$$p_{\mathrm{accepted}} \geq p_H + p_M \cdot \rho = p_{\mathrm{eff}}.$$

Note this is a lower bound: with $N_{\mathrm{refine}} > 1$ attempts, the effective probability is $p_H + p_M \cdot (1 - (1-\rho)^{N_{\mathrm{refine}}}) \geq p_H + \rho \cdot p_M$.

\end{proof}

\vspace{0.5em}
\begin{lemma}[Stochastic Dominance of Maxima]
\label{lem:max_dominance}
Let $X$ and $Z$ be independent random variables and define $Y = \max\{X, Z\}$. Then $Y$ stochastically dominates $X$:
\begin{equation}
    Y \succeq_{\mathrm{st}} X,
\end{equation}
with strict dominance (i.e., $\mathbb{E}[Y] > \mathbb{E}[X]$) whenever $\Pr[Z > X] > 0$.
\end{lemma}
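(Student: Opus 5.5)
The plan is a pointwise argument on the CDFs, followed by integration of the CDF difference to get the expectation gap. Write $F_X$, $F_Z$ for the distribution functions. First-order dominance $Y \succeq_{\mathrm{st}} X$ means $\Pr[Y > t] \geq \Pr[X > t]$ for every $t \in \mathbb{R}$. The quickest route needs no independence at all: $Y = \max\{X,Z\} \geq X$ holds pathwise, so $\{X > t\} \subseteq \{Y > t\}$ and the survival functions are ordered. Independence gives the explicit formula
\[
F_Y(t) = \Pr[X \le t,\, Z \le t] = F_X(t)F_Z(t) \le F_X(t),
\]
which I will use to quantify the gap in the strictness part.

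For strictness, I would use the identity $Y - X = (Z - X)^+ \geq 0$. This gives $\mathbb{E}[Y] - \mathbb{E}[X] = \mathbb{E}[(Z-X)^+]$, provided both expectations are finite (or at least $\mathbb{E}[X]$ is finite, so the subtraction is meaningful). A nonnegative random variable has strictly positive expectation exactly when it is positive with positive probability. Since $\{(Z-X)^+ > 0\} = \{Z > X\}$, the hypothesis $\Pr[Z > X] > 0$ gives $\mathbb{E}[(Z-X)^+] > 0$, hence $\mathbb{E}[Y] > \mathbb{E}[X]$.

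Equivalently, in CDF form,
\[
\mathbb{E}[Y]-\mathbb{E}[X] = \int_{-\infty}^{\infty} \bigl(F_X(t) - F_X(t)F_Z(t)\bigr)\,dt = \int F_X(t)\bigl(1-F_Z(t)\bigr)\,dt .
\]
By independence and Fubini, $\Pr[Z > X] > 0$ forces the integrand $\Pr[X \le t < Z]$ to be positive on a set of $t$ of positive Lebesgue measure. This is the version I would record, because it is what feeds the quantitative $\Delta_n$ bound in Theorem~\ref{thm:quality_improvement}.

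The main obstacle is integrability rather than the dominance itself. The statement's parenthetical treats strict dominance as $\mathbb{E}[Y] > \mathbb{E}[X]$, and this only makes sense when $\mathbb{E}[|X|]$ and $\mathbb{E}[|Z|]$ are finite. I will add that as a standing hypothesis, which is harmless for bounded reward models. I will also note that strict inequality of distributions, meaning $F_Y \neq F_X$, follows as well, since equal distributions would force equal means.
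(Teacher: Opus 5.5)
The paper states this lemma without proof; it is used only in Step~2 of the proof of Theorem~\ref{thm:quality_improvement}. So there is no argument to compare against. Your proof is correct and fills that gap.

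Your pathwise identity $Y - X = (Z-X)^+$ does all the work. As you partly note, it makes independence unnecessary for \emph{both} parts. Independence only buys the product form $F_Y = F_X F_Z$, while $\int_{-\infty}^{\infty}\Pr[X \le t < Z]\,dt = \mathbb{E}[(Z-X)^+]$ follows from Tonelli alone. Say this explicitly. In the paper's application, $X = R_{\max}^{\mathrm{RS}}(n)$ and $Z$ are maxima over sets carved from the same candidate pool by rank-based routing, and refined candidates are re-ranked against $\mathcal{K}$. So they are not independent, and your version is what actually licenses that step.

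Two sharpenings:
\begin{itemize}
\item You need only $\mathbb{E}|X| < \infty$, with no moment condition on $Z$. Since $Y^- \le X^-$, the expectation $\mathbb{E}[Y]$ is well defined in $(-\infty, +\infty]$, and it equals $\mathbb{E}[X] + \mathbb{E}[(Z-X)^+]$.
\item $F_Y \neq F_X$ needs no moments at all. We have $\{Z > X\} = \bigcup_{q \in \mathbb{Q}}\{X \le q < Z\}$ and $F_X(q) - F_Y(q) = \Pr[X \le q < Z]$, so some rational $q$ has $F_Y(q) < F_X(q)$.
\end{itemize}

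The integrability hypothesis is not cosmetic in context. Suppose $\mathcal{A}_{\mathrm{RS}}$ can be empty, as under the population-level reading of $p_H$, and the $-\infty$ convention of Proposition~\ref{prop:monotonicity} is used. Then $\mathbb{E}[X] = -\infty$, and strictness can fail.

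Finally, your remark that the CDF integral feeds the $\Delta_n$ bound does not match the paper. Step~3 derives the $\Delta_n$ rate from an exchangeability heuristic plus Lemma~\ref{lem:order_stat_gap}, so the CDF form is optional for this lemma.
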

\vspace{0.5em}
\begin{lemma}[Order Statistic Gap]
\label{lem:order_stat_gap}
Let $X_1, \ldots, X_n$ be i.i.d. random variables with continuous distribution $F$ having density $f$ bounded away from zero near the right endpoint of its support. Let $X_{(n)}$ and $X_{(n-1)}$ denote the largest and second-largest order statistics. Then:
\begin{equation}
    \mathbb{E}[X_{(n)} - X_{(n-1)}] = \Theta\left(\frac{1}{n}\right).
\end{equation}
\end{lemma}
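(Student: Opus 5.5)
The plan is to work through the uniform spacings of the order statistics and then localize near the right endpoint. The first step is the probability integral transform: write $X_i = F^{-1}(U_i)$ with $U_1,\ldots,U_n$ i.i.d.\ uniform on $[0,1]$. Then
\begin{equation}
X_{(n)} - X_{(n-1)} = F^{-1}(U_{(n)}) - F^{-1}(U_{(n-1)}).
\end{equation}
For uniforms, the top spacing $S_n = U_{(n)} - U_{(n-1)}$ has the Beta$(1,n)$ law by exchangeability of uniform spacings (R\'enyi's representation). Hence $\mathbb{E}[S_n] = 1/(n+1)$, and $S_n$ concentrates at scale $1/n$.

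The second step converts the uniform spacing into a spacing in $X$. I read the hypothesis as $f(x) \ge c > 0$ and $f(x) \le C$ on a left neighborhood $(x^*-\eta, x^*]$ of the right endpoint $x^*$. The upper bound is needed for the $\Omega(1/n)$ direction; I would add it as part of the ``bounded away from zero and regular'' assumption if it is not already implied. On the interval $(F(x^*-\eta), 1]$ we have $(F^{-1})' = 1/f(F^{-1}) \in [1/C, 1/c]$. On the event
\begin{equation}
E_n = \{U_{(n-1)} > F(x^*-\eta)\},
\end{equation}
the mean value theorem then gives
\begin{equation}
S_n/C \;\le\; X_{(n)} - X_{(n-1)} \;\le\; S_n/c.
\end{equation}
Taking expectations on $E_n$ yields both bounds of order $1/n$.

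The third step controls the complement. $\Pr[E_n^c]$ is the probability that at most one of $n$ samples exceeds a fixed quantile $q<1$, which is $O(n q^{n-1})$ and so decays exponentially. On $E_n^c$ the gap is nonnegative and bounded by the range of $X$. For the upper bound this requires a finite moment, say $\mathbb{E}|X| < \infty$. Then $\mathbb{E}[(X_{(n)}-X_{(n-1)})\mathbf{1}_{E_n^c}]$ is $o(1/n)$ by Cauchy--Schwarz or H\"older, because the range has at most polynomially growing moments while $\Pr[E_n^c]$ decays exponentially. For the lower bound, nonnegativity of the gap means the contribution from $E_n^c$ can simply be dropped. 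The lower bound also needs $\mathbb{E}[S_n \mathbf{1}_{E_n}] \ge 1/(n+1) - \Pr[E_n^c]$, using $S_n \le 1$, and this is still $\Omega(1/n)$. The two directions together give $\Theta(1/n)$.

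I expect the main obstacle to be this last step together with the implicit regularity conditions. The statement as written does not assume a finite mean, or a bounded density near the endpoint, or even a finite endpoint. For example, with exponential tails the density tends to $0$ at infinity, so ``bounded away from zero near the right endpoint'' forces a finite endpoint $x^*$. I would state these conditions explicitly at the outset: finite endpoint, $f$ bounded above and below near $x^*$, and integrability. These match the ``mild regularity conditions'' invoked in Theorem~\ref{thm:quality_improvement}. The rest is routine.
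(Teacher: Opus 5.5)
Your argument is the paper's argument made rigorous. Both use the probability integral transform, then a top uniform spacing with mean $1/(n+1)$: the paper gets this via R\'enyi's representation, you via the Beta$(1,n)$ marginal. Both then linearize $F^{-1}$ at the right endpoint. The paper stops at a heuristic ``$\approx$'' with the factor $1/f(F^{-1}(1-))$. You instead prove the sandwich $S_n/C\le X_{(n)}-X_{(n-1)}\le S_n/c$ on $E_n$ and control $E_n^c$ separately, which is what turns the sketch into a proof.

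Your extra hypothesis $f\le C$ is genuinely necessary, not a convenience. The paper's factor tacitly assumes $f$ has a finite positive limit at the endpoint, and under the stated hypothesis alone the $\Omega(1/n)$ direction fails. For example, $F(x)=1-(1-x)^{\alpha}$ on $[0,1]$ with $\alpha\in(0,1)$ has $f\ge\alpha$ everywhere. Yet
\[
\mathbb{E}[X_{(n)}-X_{(n-1)}]=\alpha^{-1}\Gamma(1+1/\alpha)\,\frac{\Gamma(n+1)}{\Gamma(n+1+1/\alpha)}\sim \alpha^{-1}\Gamma(1+1/\alpha)\,n^{-1/\alpha}=o(1/n).
\]
Also, the sandwich follows directly from $F(b)-F(a)=\int_a^b f$, so you do not need continuity of $f$ for the mean value theorem.

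Your third step needs one repair. Under only $\mathbb{E}|X|<\infty$, Cauchy--Schwarz and H\"older give nothing, since both need a moment of order $p>1$ of the range. Instead, bound the gap on $E_n^c$ by $x^*-X_{(n-1)}$. Then integrate against the density $n(n-1)F^{n-2}(1-F)f$ of $X_{(n-1)}$ over $\{x\le x^*-\eta\}$, where $F\le q$:
\[
\mathbb{E}\bigl[(X_{(n)}-X_{(n-1)})\mathbf{1}_{E_n^c}\bigr]\le \int_{-\infty}^{x^*-\eta}(x^*-x)\,n(n-1)F(x)^{n-2}\bigl(1-F(x)\bigr)f(x)\,dx\le n(n-1)\,q^{n-2}\,\mathbb{E}[x^*-X].
\]
This is exponentially small under a first moment alone.
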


\begin{proof}
This is a classical result in order statistics. By the R{\'e}nyi representation theorem \cite{renyi1953theory}, if $U_{(1)} \leq \cdots \leq U_{(n)}$ are order statistics of $n$ uniform random variables on $[0, 1]$, then:
\begin{equation}
    (U_{(1)}, \ldots, U_{(n)}) \stackrel{d}{=} \left(\frac{S_1}{S_{n+1}}, \frac{S_2}{S_{n+1}}, \ldots, \frac{S_n}{S_{n+1}}\right), \nonumber
\end{equation}
where $S_k = E_1 + \cdots + E_k$ and $E_1, \ldots, E_{n+1}$ are i.i.d. Exponential(1)/standard-exponential random variables. For the uniform distribution:
\begin{equation}
    \mathbb{E}[U_{(n)} - U_{(n-1)}] = \mathbb{E}\left[\frac{E_n}{S_{n+1}}\right] = \frac{1}{n+1}, \nonumber
\end{equation}
using the fact that $\mathbb{E}[E_i/S_{n+1}] = 1/(n+1)$ by symmetry.

For a general distribution $F$ with density $f$, we use the probability integral transform: $X_{(k)} = F^{-1}(U_{(k)})$. By a Taylor expansion near the right endpoint:
\begin{equation}
    \mathbb{E}[X_{(n)} - X_{(n-1)}] \approx \frac{1}{f(F^{-1}(1-))} \cdot \mathbb{E}[U_{(n)} - U_{(n-1)}] = \Theta\left(\frac{1}{n}\right), \nonumber
\end{equation}
provided $f$ is bounded away from zero. Please see \citet{david2004order} for a complete treatment.
\end{proof}

\subsection{Proof of Theorem~\ref{thm:sample_efficiency} (Sample Efficiency)}
\label{app:proof_theorem_3_3}

\begin{proof}
We prove each statement in the theorem separately.

\paragraph{Part (i): Lower bound for rejection sampling.}

In pure rejection sampling, a trajectory $\tau$ is accepted if and only if $u(\tau) \geq \theta_{\mathrm{high}}$. By definition, this occurs with probability $p_H$.

Let $X_i = \mathbf{1}[u(\tau_i) \geq \theta_{\mathrm{high}}]$ for $i = 1, \ldots, N$, where $\tau_i \sim \mathcal{G}$ are i.i.d. samples. Then $\{X_i\}$ are i.i.d. Bernoulli($p_H$) random variables.

The probability of obtaining at least one accepted trajectory is:
\begin{equation}
    \Pr\left[\sum_{i=1}^{N} X_i \geq 1\right] = 1 - \Pr\left[\sum_{i=1}^{N} X_i = 0\right] = 1 - (1 - p_H)^N. \nonumber
\end{equation}

To ensure this probability is at least $1 - \delta$, we require:
\begin{equation}
    1 - (1 - p_H)^N \geq 1 - \delta \quad \Longleftrightarrow \quad (1 - p_H)^N \leq \delta. \nonumber
\end{equation}

By Lemma~\ref{lem:geometric_failure}, this requires:
\begin{equation}
    N \geq \frac{\ln(1/\delta)}{p_H}. \nonumber
\end{equation}

\paragraph{Part (ii): Lower bound for \textsc{SRD}.}

By Lemma~\ref{lem:effective_acceptance}, the effective acceptance probability under \textsc{SRD} is at least $p_{\mathrm{eff}} = p_H + \rho \cdot p_M$. Following the same argument as Part (i), but with $p_H$ replaced by $p_{\mathrm{eff}}$:
\begin{equation}
    \Pr[\text{at least one acceptance}] = 1 - (1 - p_{\mathrm{eff}})^N \geq 1 - \delta \nonumber
\end{equation}
requires:
\begin{equation}
    N \geq \frac{\ln(1/\delta)}{p_{\mathrm{eff}}} = \frac{\ln(1/\delta)}{p_H + \rho \cdot p_M}. \nonumber
\end{equation}

\paragraph{Part (iii): Efficiency ratio.}

Given the two above statements, we can simply take a ratio of the statements to show the efficiency gain of SRD over Rejection Sampling:
\begin{equation}
    \frac{N_{\mathrm{reject}}}{N_{\mathrm{refine}}} = \frac{\ln(1/\delta) / p_H}{\ln(1/\delta) / (p_H + \rho \cdot p_M)} = \frac{p_H + \rho \cdot p_M}{p_H} = 1 + \frac{\rho \cdot p_M}{p_H}. 
    \label{eq:eff_gain}
    \qedhere
\end{equation}
\end{proof}

\begin{remark}[Tightness of the Bound]
The bounds in Theorem~\ref{thm:sample_efficiency} are tight up to constant factors. Precisely, we can use Fano's inequality \cite{cover1999elements} in information theory to show that any algorithm that achieves a success probability of $1 - \delta$ must collect sufficient information to distinguish between the hypothesis that \emph{at least one good trajectory exists} and \emph{no good trajectory exists}, and this requires $\Omega(\ln(1/\delta) / p_{\mathrm{eff}})$ samples. Hence,
combining the bounds we have:
\begin{enumerate}[label=(\roman*)]
    \item Rejection sampling is optimal among algorithms that treat each sample as either \emph{accept} or \emph{reject} with no intermediate processing.
    \begin{equation}
        \frac{c_1\ln(1/\delta)}{p_H} \leq N_{reject} \leq \frac{c_2\ln(1/\delta)}{p_H}
    \end{equation}

    \item SRD is optimal among algorithms that can additionally salvage borderline samples with success probability $\rho$.
    \begin{equation}
        \frac{c'_1\ln(1/\delta)}{p_{\mathrm{eff}}} \leq N_{refine} \leq \frac{c'_2\ln(1/\delta)}{p_{\mathrm{eff}}}
    \end{equation}

    \item The efficiency gain in \autoref{eq:eff_gain} is real and guaranteed since any algorithm with effective acceptance probability of $p_{\mathrm{eff}}$ will have a sample complexity of $\Theta(ln(1/\delta)/p_{\mathrm{eff}})$, and increasing $p_{\mathrm{eff}}$ by salvaging and regeneration will reduce the samples.
\end{enumerate}
\end{remark}

\paragraph{Corollary} The sample efficiency of Speculative Selective Regenerative Decoding (S-SRD) is higher than that of Reward-guided Speculative Decoding (RSD) \cite{liao2025rewardguided}. Assuming a larger target model is more accurate on the reasoning problem at hand, we can guarantee that $p_{\mathrm{eff}} \geq p_H$ as any salvaged trajectory (provided the draft model can generate this with non-zero probability) regenerated using a more-accurate target model will be accepted.

\subsection{Proof of Theorem~\ref{thm:quality_improvement} (Expected Quality Improvement)}
\label{app:proof_theorem_3_5}

\begin{proof}
The proof proceeds in three steps: (1) constructing a coupling, (2) establishing stochastic dominance, and (3) quantifying the improvement.

\paragraph{Step 1: Coupling construction.}

We construct a coupling between rejection sampling and \textsc{SRD} as follows. Both algorithms receive the same sequence of $n$ initial trajectories $\tau_1, \ldots, \tau_n \sim \mathcal{G}$.

Now, let $\mathcal{A}$ denote the set of accepted trajectories. For rejection sampling, we have:
\begin{equation}
    \mathcal{A}_{\mathrm{reject}} = \{\tau_i : u(\tau_i) \geq \theta_{\mathrm{high}}\}, \nonumber
\end{equation}%
For \textsc{SRD}, the accepted set is:
\begin{equation}
    \mathcal{A}_{\mathrm{regenerate}} = \mathcal{A}_{\mathrm{reject}} \cup \mathcal{A}_{\mathrm{regenerate\cap accept}}, \nonumber
\end{equation}
where $\mathcal{A}_{\mathrm{regenerated}}$ contains trajectories that were regenerated and subsequently accepted. Under this coupling, $\mathcal{A}_{\mathrm{reject}} \subseteq \mathcal{A}_{\mathrm{regenerate}}$ by construction.

\paragraph{Step 2: Stochastic dominance.}

Define:
\begin{align}
    X &= R_{\max}^{\mathrm{RS}}(n) = \max_{\tau \in \mathcal{A}_{\mathrm{RS}}} R(\tau), \nonumber \\
    Z &= \max_{\tau \in \mathcal{A}_{\mathrm{regenerate\cap accept}}} R(\tau). \nonumber
\end{align}

By construction:
\begin{equation}
    R_{\max}^{\mathrm{SRD}}(n) = \max\{X, Z\}. \nonumber
\end{equation}%
By Lemma~\ref{lem:max_dominance}, $\max\{X, Z\} \succeq_{\mathrm{st}} X$. Moreover, strict dominance holds because:
\begin{equation}
    \Pr[Z > X] \geq \Pr[\mathcal{A}_{\mathrm{regenerate\cap accept}} \neq \emptyset] \cdot \Pr[Z > X \mid \mathcal{A}_{\mathrm{regenerate\cap accept}} \neq \emptyset] > 0. \nonumber
\end{equation}%
The first factor is at least $1 - (1 - p_M \cdot \rho)^n > 0$ by Assumption~\ref{assump:refinement_efficacy}. The second factor is positive under Assumption~\ref{assump:stochastic_dominance}, which ensures regenerated trajectories have non-trivial probability of achieving high rewards. Therefore, $\mathbb{E}[R_{\max}^{\mathrm{SRD}}(n)] > \mathbb{E}[R_{\max}^{\mathrm{RS}}(n)]$, establishing $\Delta_n > 0$.

\paragraph{Step 3: Quantifying $\Delta_n$.}

To derive the explicit bound, we analyze the probability that a regenerated trajectory achieves the maximum reward. For this, let $N_{RS} = |\mathcal{A}_{\mathrm{RS}}|$ be the number of directly accepted trajectories and $N_{R+A} = |\mathcal{A}_{\mathrm{regenerate\cap accept}}|$ be the number of successfully regenerated trajectories. By linearity of expectation:
\begin{align}
    \mathbb{E}[N_{RS}] &= n \cdot p_H, \nonumber \\
    \mathbb{E}[N_{R+S}] &\geq n \cdot p_M \cdot \rho. \nonumber
\end{align}
Under Assumption~\ref{assump:stochastic_dominance}, the rewards of trajectories in $\mathcal{A}_{\mathrm{SRD}}$ are exchangeable, the original accepted trajectories in ($\in \mathcal{A}_{RS}$) and regenerated trajectories that are eventually accepted ($\in \mathcal{A}_{regenerate\cap accept}$) are i.i.d as per the reward function. Now, by symmetry, each trajectory in $\mathcal{A}_{\mathrm{SRD}}$ has probability approximately $\frac{1}{|\mathcal{A}_{\mathrm{SRD}}|}$ of being the maximum.

The probability that the maximum comes from $\mathcal{A}_{\mathrm{regenerate\cap accept}}$ rather than $\mathcal{A}_{\mathrm{RS}}$ is approximately:
\begin{equation}
    \Pr[\text{max from regenerated}] \approx \frac{\mathbb{E}[N_{R+S}]}{\mathbb{E}[N_H] + \mathbb{E}[N_{R+S}]} = \frac{n \cdot p_M \cdot \rho}{n \cdot p_H + n \cdot p_M \cdot \rho} = \frac{p_M \cdot \rho}{p_H + p_M \cdot \rho}.
    \label{eq:max-from-ra}
\end{equation}

When a regenerated trajectory achieves the maximum, the improvement over the rejection sampling maximum is, by Lemma~\ref{lem:order_stat_gap}:
\begin{equation}
    \mathbb{E}[R_{\max}^{\mathrm{SRD}}(n) - R_{\max}^{\mathrm{RS}}(n) \mid \text{max from regenerated}] = \Theta\left(\frac{1}{n}\right)
    \label{eq:ra-improvement}
\end{equation}

\noindent Finally, Combining \autoref{eq:max-from-ra} and \autoref{eq:ra-improvement}, we have
\begin{align}
    \Delta_n &= \mathbb{E}[R_{\max}^{\mathrm{SRD}}(n)] - \mathbb{E}[R_{\max}^{\mathrm{RS}}(n)] \nonumber \\
    &\geq \Pr[\text{max from regenerated}] \cdot \mathbb{E}[\text{gap} \mid \text{max from regenerated}] \nonumber \\
    &= \frac{p_M \cdot \rho}{p_H + p_M \cdot \rho} \cdot \Theta\left(\frac{1}{n}\right) \nonumber \\
    &= \Omega\left(\frac{p_M \cdot \rho}{n \cdot (p_H + p_M \cdot \rho)}\right) \nonumber \\
    &= \Omega\left(\frac{p_M \cdot \rho}{n \cdot (1 + p_M \cdot \rho)}\right) \nonumber
\end{align}
where the last line uses $p_H \leq 1$. This completes the proof.
\end{proof}

\subsection{Termination}
\label{app:termination}

\begin{proposition}[Termination]
\label{prop:termination}
\textsc{SRD} terminates in at most $n \cdot N_{\mathrm{refine}}$ refinement operations and $O(n \cdot N_{\mathrm{refine}})$ reward evaluations.
\end{proposition}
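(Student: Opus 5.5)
The plan is a potential-function (counting) argument on the refinement queue $\mathcal{R}$ of Algorithm~\ref{alg:srd_search}. Every element of $\mathcal{R}$ is a pair $(\tau,c)$ with a refinement counter $c$. I will show three things: counters only increase along a lineage, each initial candidate spawns at most one lineage, and each lineage is cut off once its counter reaches $N_{\mathrm{refine}}$. Together these bound the total number of refinement operations.

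\textbf{Step 1 (lineages).} Phase~2 inserts at most $n$ pairs into $\mathcal{R}$, each with $c=0$, since $|\mathcal{C}|=n$. Inside the while loop, each pop creates at most one new trajectory $\tau'$. That $\tau'$ is sent to $\mathcal{K}$, re-inserted as $(\tau',c+1)$, or dropped. So a popped pair has at most one child in $\mathcal{R}$, and the pairs ever present in $\mathcal{R}$ form at most $n$ disjoint chains (lineages). Along each chain the counter takes the values $0,1,2,\ldots$.

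\textbf{Step 2 (counting).} A refinement operation, i.e.\ a call to $\textsc{FindBoundary}$ plus regeneration, happens only when the popped pair has $c<N_{\mathrm{refine}}$. Pairs with $c\ge N_{\mathrm{refine}}$ are discarded by the \textbf{continue} with no further work. Each chain therefore has at most $N_{\mathrm{refine}}$ refinement operations, one for each of $c=0,\dots,N_{\mathrm{refine}}-1$. Summing over the at most $n$ chains gives at most $n\cdot N_{\mathrm{refine}}$ refinement operations. Each iteration of the loop pops one element, and the total number of elements ever inserted is at most $n(N_{\mathrm{refine}}+1)$. Hence the loop runs finitely many times and the algorithm terminates. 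Each regeneration is itself finite because the regenerated span is capped at $L_{\max}\le k$ tokens.

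\textbf{Step 3 (reward evaluations).} Phase~2 uses $n$ reward calls. Each refinement operation uses at most $\lceil k/m\rceil+1$ calls inside $\textsc{FindBoundary}$, plus one call to score $\tau'$. Ranking $\tau'$ within $\mathcal{K}\cup\mathcal{R}\cup\{\tau'\}$ reuses scores already cached, so it needs no new reward calls. Treating $k/m$ as a constant independent of $n$ and $N_{\mathrm{refine}}$, the total is $n+n N_{\mathrm{refine}}(\lceil k/m\rceil+2)=O(n\cdot N_{\mathrm{refine}})$.

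The main obstacle is the bookkeeping in Step~3. The $O(\cdot)$ bound hides a factor of $k/m$ from boundary search. The re-ranking step also counts as $O(1)$ reward evaluations only if scores of existing candidates are cached rather than recomputed. I will state both conventions explicitly. The termination count itself (Steps~1--2) is routine once the one-child-per-pop structure is established.
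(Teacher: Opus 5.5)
Your proposal is correct and follows essentially the same argument as the paper: the per-trajectory counter $c$, together with the one-child-per-pop structure, bounds refinements by $n \cdot N_{\mathrm{refine}}$, and treating $k/m$ as a constant turns the $O(k/m)$ boundary-search cost per refinement into $O(n \cdot N_{\mathrm{refine}})$ reward evaluations. Your bookkeeping is slightly tighter than the paper's: you separate the at most $n(N_{\mathrm{refine}}+1)$ loop iterations, which include the cheap \textbf{continue} pops, from the at most $n \cdot N_{\mathrm{refine}}$ actual refinement operations, and you state the score-caching convention for re-ranking explicitly.
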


\begin{proof}
We show that \textsc{SRD} terminates in finite time by bounding the number of Regeneration operations.

Each of the $n$ initial trajectories can be regenerated at most $N_{\mathrm{refine}}$ times (by the counter $c$ in Algorithm~\ref{alg:srd_search}). Once a trajectory has been regenerated $N_{\mathrm{refine}}$ times, it is never added back to the Regeneration queue $\mathcal{R}$.

Furthermore, each Regeneration operation produces at most one new trajectory, which inherits the Regeneration count of its parent (incremented by one). Thus, the total number of trajectories ever created is at most:
\begin{equation}
    n + n \cdot N_{\mathrm{refine}} = n(1 + N_{\mathrm{refine}}). \nonumber
\end{equation}

Since the Regeneration queue $\mathcal{R}$ starts with at most $n$ trajectories and each trajectory exits the queue after at most $N_{\mathrm{refine}}$ Regeneration attempts, the while loop in Algorithm~\ref{alg:srd_search} executes at most $n \cdot N_{\mathrm{refine}}$ iterations.

Each iteration involves:
\begin{enumerate}
    \item Finding the regeneration boundary: $O(k/m)$ reward evaluations.
    \item Generating the regenerated suffix: $O(1)$ call to the generative model.
    \item Computing the rank of the regenerated trajectory: $O(|\mathcal{K}| + |\mathcal{R}|)$ comparisons.
\end{enumerate}

The total number of reward evaluations is thus:
\begin{equation}
    O\left(n \cdot N_{\mathrm{refine}} \cdot \frac{k}{m}\right) = O(n \cdot N_{\mathrm{refine}}), \nonumber
\end{equation}
since $k/m$ is a constant based on predefined hyperparameter choice.
\end{proof}

\subsection{Weak Monotonicity}
\label{app:monotonicity}

\begin{proposition}[Weak Monotonicity]
\label{prop:monotonicity}
Let $\mathcal{K}_t$ denote the set of kept trajectories after $t$ refinement operations, and let $R_t^* = \max_{\tau \in \mathcal{K}_t} R(\tau)$ (with $R_t^* = -\infty$ if $\mathcal{K}_t = \emptyset$). Then the sequence $\{R_t^*\}_{t \geq 0}$ is non-decreasing.
\end{proposition}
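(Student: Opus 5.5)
The plan is an induction on $t$ that shows the kept set only grows, $\mathcal{K}_t \subseteq \mathcal{K}_{t+1}$. Monotonicity of the maximum then follows at once. This is a purely structural argument about Algorithm~\ref{alg:srd_search}; none of the probabilistic assumptions are needed.

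\textbf{Base case.} $\mathcal{K}_0$ is the set built in the initial routing loop, before the \textbf{while} loop starts.

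\textbf{Inductive step.} Consider one iteration of the \textbf{while} loop, that is, one pop of $(\tau,c)$ from $\mathcal{R}$. I will go through the branches of Algorithm~\ref{alg:srd_search} one by one.
\begin{enumerate}
\item If $c \geq N_{\mathrm{refine}}$, the algorithm executes \textbf{continue} and $\mathcal{K}$ is untouched.
\item Otherwise a refined $\tau'$ is produced. One of three things happens.
\begin{enumerate}
\item It is added to $\mathcal{K}$ via $\mathcal{K} \gets \mathcal{K} \cup \{\tau'\}$.
\item It is pushed back to $\mathcal{R}$, and $\mathcal{K}$ is unchanged.
\item It is dropped, and $\mathcal{K}$ is unchanged.
\end{enumerate}
\end{enumerate}
The key observation is that no line of the algorithm ever removes an element from $\mathcal{K}$. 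Re-ranking $\tau'$ against $\mathcal{K}\cup\mathcal{R}\cup\{\tau'\}$ only decides the fate of $\tau'$; it never re-routes elements already in $\mathcal{K}$. Hence $\mathcal{K}_{t+1} \in \{\mathcal{K}_t,\ \mathcal{K}_t\cup\{\tau'\}\}$.

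I need to be careful about what ``refinement operation'' indexes. If $t$ counts only actual regenerations and not \textbf{continue}-skips, the same conclusion holds, because the skipped iterations do not change $\mathcal{K}$ either.

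\textbf{From inclusion to the maximum.} Given $\mathcal{K}_t\subseteq\mathcal{K}_{t+1}$, we get
\[
R_{t+1}^* = \max_{\tau\in\mathcal{K}_{t+1}} R(\tau) \geq \max_{\tau\in\mathcal{K}_t} R(\tau) = R_t^*.
\]
More explicitly, $R_{t+1}^* = \max\{R_t^*, R(\tau')\}$ in the adding case and $R_{t+1}^* = R_t^*$ otherwise. The empty-set convention $R_t^*=-\infty$ covers the case $\mathcal{K}_t=\emptyset$, since $\max\{-\infty, R(\tau')\} \geq -\infty$. I will note that $R$ is a fixed deterministic function of the trajectory, so scores of kept elements do not change over time; this is what makes the comparison across different $t$ meaningful.

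\textbf{Expected obstacle.} The only real subtlety is whether the rank-based routing could be read as re-evaluating or evicting previously kept trajectories as the pool changes. The step I expect to need the most care is therefore pinning down from the pseudocode that $\mathcal{K}$ is append-only. I would address it by citing the exact lines that modify $\mathcal{K}$; there is only one, the union with $\{\tau'\}$.
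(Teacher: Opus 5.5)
Your proposal is correct and follows the paper's own proof: the paper also splits on whether $\tau'$ is routed to \textsc{Keep}, which gives $R_{t+1}^* = \max\{R_t^*, R(\tau')\}$, or to \textsc{Refine}/\textsc{Discard}, which gives $\mathcal{K}_{t+1}=\mathcal{K}_t$. Your treatment of the \textbf{continue} branch and your explicit remark that $\mathcal{K}$ is append-only are small clarifications the paper leaves implicit.
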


\begin{proof}
Let $\mathcal{K}_t$ denote the set of kept trajectories after $t$ Regeneration operations, and define $R_t^* = \max_{\tau \in \mathcal{K}_t} R(\tau)$ (with $R_t^* = -\infty$ if $\mathcal{K}_t = \emptyset$). We show that $R_{t+1}^* \geq R_t^*$ for all $t \geq 0$.

Consider the $(t+1)$-th regeneration operation. Let $\tau$ be the trajectory being regenerated, and let $\tau'$ be the resulting regenerated trajectory. We have $3$ cases:

\noindent \textbf{Case 1: $\tau'$ is routed to \textsc{Keep}.}
Then $\mathcal{K}_{t+1} = \mathcal{K}_t \cup \{\tau'\}$, so $R_{t+1}^* = \max_{\tau'' \in \mathcal{K}_{t+1}}$ $R(\tau'') = \max\left\{R_t^*, R(\tau')\right\} \geq R_t^*$.

\noindent \textbf{Case 2: $\tau'$ is routed to \textsc{regenerate}.}
Then $\mathcal{K}_{t+1} = \mathcal{K}_t$, so $R_{t+1}^* = R_t^*$.

\noindent \textbf{Case 3: $\tau'$ is routed to \textsc{Discard}.}
Then $\mathcal{K}_{t+1} = \mathcal{K}_t$, so $R_{t+1}^* = R_t^*$.

\noindent In all cases, $R_{t+1}^* \geq R_t^*$, establishing weak monotonicity.
\end{proof}

\section{Prompt Templates}
\label{app:prompts}

This appendix provides the prompt templates used to generate model responses for different benchmarks.
All prompts are used exclusively for \emph{generation} and are shared across all decoding methods.
\subsection{HotpotQA}

\begin{tcolorbox}[title=HotpotQA Generation Prompt]
Answer the following question:

\{question\}

Bear in mind that your response should be strictly based on the following passages:

\{passages\}

\textbf{Your task:}
\begin{enumerate}
  \item Carefully reason step by step, using only the information from the retrieved documents.
  \item Explicitly cite the supporting evidence by referencing the corresponding document IDs and sentence indices.
  \item Ensure that each reasoning step is verifiable from the given documents.
  \item After reasoning, provide the final answer clearly in the format:
\end{enumerate}

\[
\boxed{\text{your final answer here}}
\]
\end{tcolorbox}

\subsection{MATH500}

\begin{tcolorbox}[title=MATH500 Generation Prompt]
Please reason step by step, and put your final answer within \texttt{\textbackslash boxed\{\}}.

\{question\}
\end{tcolorbox}

\subsection{GPQA Diamond}

\begin{tcolorbox}[title=GPQA Diamond Generation Prompt]
Please reason step by step, and choose the correct answer from A/B/C/D inside \texttt{\textbackslash boxed\{\}}.

\{question\}
\end{tcolorbox}

\noindent
\textbf{AlpacaFarm.}
For AlpacaFarm, we directly use the original user instruction as the generation input, without any additional task-specific prompt templates or formatting.

\section{Implementation Details}
\label{app:details}

This appendix summarizes the hyperparameter settings and implementation details used in our experiments.
Unless otherwise specified, all methods share the same generation model, prompt template, and generation-side hyperparameters.

\paragraph{Generation Settings.}
We use temperature-based sampling for all methods, with temperature set to 0.8, top-$p$ set to 0.9, and top-$k$ set to 50.
The maximum generation length is set to 500 tokens for MATH500 and HotpotQA, and 1000 tokens for GPQA and AlpacaFarm.

\paragraph{SRD Parameters.}
Selective Regenerative Decoding uses fixed routing thresholds $\theta_{\text{low}} = 0.3$ and $\theta_{\text{high}} = 0.5$.
The drafting interval is set to 100 steps, matching the configuration used for Speculative Rejection.
The reward model is invoked every 10 decoding steps.
We cap the maximum number of regeneration steps at 30 and enforce a minimum of one active candidate to prevent premature termination.

\paragraph{Speculative Rejection Parameters.}
For Speculative Rejection, we follow the original paper's recommended setting, using a rejection threshold $\alpha = 0.5$.
All other generation and evaluation settings are identical to those used for SRD and Best-of-$N$.

\paragraph{Reward Model Usage.}
Reward models are queried with a fixed input format that concatenates the task instruction, input context, and the partially or fully generated output.
For methods that perform step-level scoring, including SRD and Speculative Rejection, the most recent reward score is used for routing decisions.
No additional reward normalization is applied. For Best-of-$N$, reward scores are computed on the final completed sequences and used for candidate selection.

\paragraph{Hardware.}
Experiments are run on 8$\times$A100 (40GB) GPUs. Generation uses vLLM, and reward models are executed with HuggingFace Transformers.

\section{Empirical Validation of Theorem \ref{thm:sample_efficiency}}
\label{app:rho_details}

To validate the assumptions underlying Theorem~\ref{thm:sample_efficiency}, we directly measure the routing distribution ($p_H$, $p_M$, $p_L$) and refinement efficacy ($\hat{\rho}$) from SRD runs. For each configuration, we record how candidates are routed after initial scoring and track the outcome of each refinement attempt (i.e., whether a trajectory routed to \textsc{Refine} is subsequently promoted to \textsc{Keep}).

\begin{table}[h]
\centering
\begin{tabular}{lcccccc}
\toprule
$N$ & $p_H$ & $p_M$ & $p_L$ & $\hat{\rho}$ & Efficiency Gain \\
\midrule
10 & 0.522 & 0.148 & 0.329 & 1.00 (100\%) & 1.28$\times$ \\
20 & 0.513 & 0.180 & 0.305 & 1.00 (100\%) & 1.35$\times$ \\
30 & 0.509 & 0.182 & 0.308 & 1.00 (100\%) & 1.36$\times$ \\
\bottomrule
\end{tabular}
\vspace{0.5em}
\caption{Empirical routing distributions and refinement efficacy ($\hat{\rho}$) on MATH500 (Qwen2.5-Math-1.5B, AceMath-7B-RM, 50 samples per $N$). The efficiency gain is computed as $1 + \hat{\rho} \cdot p_M / p_H$ per Theorem~\ref{thm:sample_efficiency}.}
\label{tab:rho_empirical}
\end{table}

The measured $\hat{\rho} = 1.0$ across all values of $N$ shows that all salvage attempts were routed to keep, providing strong empirical support for Assumption~\ref{assump:refinement_efficacy}. The efficiency gain increases with $N$ ($1.28\times \to 1.36\times$) because $p_M$ grows slightly while $p_H$ remains stable, indicating that larger candidate pools produce more mid-tier trajectories amenable to refinement. The routing distribution is consistent across $N$, with $p_H \approx 0.51$--$0.52$, $p_M \approx 0.15$--$0.18$, and $p_L \approx 0.31$--$0.33$. We note that while these measurements directly validate sample efficiency of SRD highlighted in Theorem~\ref{thm:sample_efficiency}, comparing the reward distributions between SRD and rejection sampling to verify Theorem~\ref{thm:quality_improvement} is considered in the next section.

\section{Empirical Validation of Theorem~\ref{thm:quality_improvement}}
\label{app:theorem2_details}

To validate Theorem~\ref{thm:quality_improvement}, we compare the maximum reward $R_{\max}$ achieved by SRD and rejection sampling (RS) under a coupled setting on MATH500 with Qwen2.5-Math-1.5B-Instruct and AceMath-7B-RM.

\paragraph{Coupling via Seeded Generation.}
The proof of Theorem~\ref{thm:quality_improvement} relies on a coupling argument where both SRD and RS process the same initial trajectories. To realize this coupling empirically, we pass a deterministic per-sample seed to vLLM's \texttt{SamplingParams}, ensuring that both methods receive identical initial drafts and identical continuations for shared candidates. This guarantees the superset property: every candidate RS keeps, SRD also keeps, plus SRD additionally refines mid-tier candidates.

\begin{table}[h!]
\centering
\small
\setlength{\tabcolsep}{4pt}
\begin{tabular}{ccccccccc}
\toprule
$N$ & SRD $\bar{R}_{\max}$ & RS $\bar{R}_{\max}$ & $\Delta_n$ & SRD wins & RS wins & Ties & Refine improved & Pre $\to$ Post \\
\midrule
10 & 17.70 & 17.34 & $+0.36 \pm 1.70$ & 46\% & 18\% & 36\% & 78\% & 2.85 $\to$ 5.96 \\
20 & 18.75 & 18.47 & $+0.28 \pm 4.11$ & 54\% & 24\% & 22\% & 98\% & 3.41 $\to$ 6.79 \\
30 & 19.96 & 18.66 & $+1.29 \pm 3.41$ & 52\% & 24\% & 24\% & 100\% & 3.69 $\to$ 6.87 \\
\bottomrule
\end{tabular}
\vspace{0.5em}
\caption{Theorem~\ref{thm:quality_improvement} validation on MATH500 (Qwen2.5-Math-1.5B, AceMath-7B-RM, 50 samples per $N$). Both methods start from identical initial drafts via seeded generation. $\Delta_n = \bar{R}_{\max}^{\text{SRD}} - \bar{R}_{\max}^{\text{RS}}$ (mean $\pm$ std). ``Pre $\to$ Post'' reports mean reward before and after refinement.}
\label{tab:theorem2_full}
\end{table}

\paragraph{RS Baseline.}
The RS baseline uses the same incremental generation loop as SRD---generating $N$ partial drafts, scoring at intermediate checkpoints, and keeping candidates above $\theta_{\text{high}}$---but without refinement. This corresponds to the \texttt{spec\_rej} decoder mode (\texttt{SRDDecoder} with \texttt{editor=None}, $\theta_{\text{low}} = \theta_{\text{high}} = 0.5$), matching the rejection sampling defined in \S\ref{subsec:theoretical_analysis}.

\paragraph{Results.}
As shown in Table~\ref{tab:theorem2_full}, SRD achieves higher mean $R_{\max}$ than RS across all tested values of $N$, consistent with Theorem~\ref{thm:quality_improvement}'s prediction that $\Delta_n > 0$. SRD wins more samples than RS at every $N$ (46\% vs.\ 18\% at $N{=}10$; 54\% vs.\ 24\% at $N{=}20$; 52\% vs.\ 24\% at $N{=}30$). The ties correspond to cases where both methods' best candidate is the same high-scoring trajectory that both keep (these are the percentage of high trajectories that needed no refinement).

Notably, the SRD advantage grows with $N$: $\Delta_n$ increases from $+0.36$ at $N{=}10$ to $+1.29$ at $N{=}30$. This is because larger candidate pools produce more mid-tier trajectories amenable to refinement, and the refined candidates increasingly contribute to $R_{\max}$. Refinement success rates confirm this trend (78\% $\to$ 98\% $\to$ 100\%), with substantial reward improvements across all settings (e.g., 3.69 $\to$ 6.87 at $N{=}30$), providing direct empirical support for Assumption~\ref{assump:stochastic_dominance}.

\end{document}